
\documentclass[nohyperref]{article}

\usepackage{microtype}
\usepackage{graphicx}
\usepackage{subfigure}
\usepackage{booktabs} 
\usepackage[utf8]{inputenc}
\usepackage[T1]{fontenc}
\usepackage{graphicx}
\usepackage{url}            
\usepackage{amsfonts}       
\usepackage{nicefrac}       
\usepackage{lipsum}
\usepackage{float}
\usepackage{array}
\usepackage{xspace}
\usepackage{multirow}
\usepackage[export]{adjustbox}
\usepackage{stfloats}
\usepackage{natbib}
\usepackage{xcolor}
\usepackage{color}
\usepackage{wrapfig}
\usepackage{bbm}
\usepackage{caption}
\usepackage{bm}

\usepackage{hyperref}       
\hypersetup{
    unicode=false,          
    pdftoolbar=true,        
    pdfmenubar=true,        
    pdffitwindow=false,      
    pdfnewwindow=true,      
    colorlinks=true,       
    linkcolor=DarkBlue,          
    citecolor=blue,        
    filecolor=DarkRed,      
    urlcolor=DarkBlue,          
}

\usepackage{algorithm,algorithmic}
\usepackage[algo2e]{algorithm2e}



 \usepackage[accepted]{icml2022}

\usepackage{amsmath}
\usepackage{amssymb}
\usepackage{mathtools}
\usepackage{amsthm}
\usepackage{thm-restate}

\usepackage[capitalize,noabbrev]{cleveref}

\theoremstyle{plain}
\newtheorem{theorem}{Theorem}[section]

\newtheorem{claim}[theorem]{Claim}
\theoremstyle{definition}
\newtheorem{definition}[theorem]{Definition}

\newtheorem{example}[theorem]{Example}
\theoremstyle{remark}
\newtheorem{remark}[theorem]{Remark}

\usepackage[textsize=tiny]{todonotes}


\DeclareCaptionLabelFormat{andtable}{#1~#2  \&  \tablename~\thetable}

\definecolor{DarkGreen}{rgb}{0.1,0.5,0.1}
\definecolor{DarkRed}{rgb}{0.5,0.1,0.1}
\definecolor{DarkBlue}{rgb}{0.1,0.1,0.5}
\definecolor{Gray}{rgb}{0.2,0.2,0.2}

\definecolor{negvals}{rgb}{0.1,0.5,0.1}
\definecolor{posvals}{rgb}{0.5,0.1,0.1}
\definecolor{neutralvals}{RGB}{255, 140, 0}
\definecolor{neutralvals}{rgb}{0.6, 0.51, 0.48}
\definecolor{posvals}{rgb}{0.8, 0.0, 0.0}

%
%


\DeclareMathOperator*{\E}{\mathbb{E}}
\DeclareMathOperator*{\prob}{\mathbb{P}}
\DeclareMathOperator*{\sgn}{\mathrm{sgn}}


\newcommand{\Ind}{\mathbb I}


\newcommand{\defeq}{\stackrel{\small \mathrm{def}}{=}}

\DeclareMathOperator{\Var}{var}

\DeclareMathOperator*{\argmax}{argmax}

\newcommand{\cA}{{\cal A}}

\newcommand{\cL}{{\cal L}}

\newcommand{\cO}{{\cal O}}
\newcommand{\cP}{{\cal P}}

\newcommand{\cS}{{\cal S}}

\newcommand{\cV}{{\cal V}}

\newcommand{\cX}{{\cal X}}
\newcommand{\cY}{{\cal Y}}

\newcommand{\vx}{\mathbf x}
\newcommand{\vr}{\mathbf r}
\newcommand{\vy}{\mathbf y}

\newcommand{\vw}{\mathbf w}

\newcommand{\vv}{\mathbf v}
\newcommand{\vwhat}{\widehat{\mathbf w}}


\newcommand{\R}{\mathbb{R}}
\renewcommand \vec [1]{\mathbf{#1}}

\usepackage{enumitem}
\newcommand{\expalg}{\texttt{Exp4.VC} }
\newcommand{\vxi}{\boldsymbol{\xi}}

\newcounter{protocol}
\makeatletter
\newenvironment{protocol}[1]{%
  \let\c@algorithm\c@protocol
  \renewcommand{\ALG@name}{Model}
  \begin{algorithm}#1%
  }{\end{algorithm}
}
\makeatother

%

\icmltitlerunning{No-Regret Learning in Partially-Informed Auctions}

\begin{document}

\twocolumn[
\icmltitle{No-Regret Learning in Partially-Informed Auctions}




\begin{icmlauthorlist}
\icmlauthor{Wenshuo Guo}{yyy}
\icmlauthor{Michael I. Jordan}{yyy,comp}
\icmlauthor{Ellen Vitercik}{sch}
\end{icmlauthorlist}

\icmlaffiliation{yyy}{Department of Electrical Engineering \& Computer Sciences, University of California, Berkeley, USA}
\icmlaffiliation{comp}{Department of Statistics, University of California, Berkeley, USA}
\icmlaffiliation{sch}{Department of Management Science \& Engineering and Department of Computer Science, Stanford University, USA}

\icmlcorrespondingauthor{Wenshuo Guo}{wguo@cs.berkeley.edu}

\icmlkeywords{Machine Learning, ICML}

\vskip 0.3in
]



\printAffiliationsAndNotice{}  

\begin{abstract}
Auctions with partially-revealed information about items are broadly employed in real-world applications, but the underlying mechanisms have limited theoretical support. In this work, we study a machine learning formulation of these types of mechanisms, presenting algorithms that are no-regret from the buyer's perspective. Specifically, a buyer who wishes to maximize his utility interacts repeatedly with a platform over a series of $T$ rounds. In each round, a new item is drawn from an unknown distribution and the platform publishes a price together with incomplete, ``masked'' information about the item. The buyer then decides whether to purchase the item. We formalize this problem as an online learning task where the goal is to have low regret with respect to a myopic oracle that has perfect knowledge of the distribution over items and the seller's masking function. When the distribution over items is known to the buyer and the mask is a SimHash function mapping $\R^d$ to $\{0,1\}^{\ell}$, our algorithm has regret $\tilde \cO((Td\ell)^{\nicefrac{1}{2}})$. In a fully agnostic setting when the mask is an arbitrary function mapping to a set of size $n$ and the prices are stochastic, our algorithm has regret $\tilde \cO((Tn)^{\nicefrac{1}{2}})$.

\end{abstract}

\section{Introduction}\label{sec:intro}

Selling mechanisms play a crucial role in economic theory and have a wide range of applications across many industries~\citep{post1995application,milgrom2004putting, edelman2007internet, milgrom2010simplified, arnosti2016adverse}. Under the canonical mechanism design model, buyers choose whether or not to buy items for sale based on their true values for those items. This fundamental model, however, assumes that the buyers know exactly how much they value the items for sale, which is often not the case.

One of the overriding reasons that a buyer may not know their true values is information asymmetry: the seller may purposefully obfuscate information about an item for sale. For example, the seller may hide information about the item in the hopes of better revenue~\citep{gershkov2009optimal}. Alternatively, information about the item may be private, and thus the seller may wish to protect this sensitive information by only revealing partial information about the item. For instance, in online advertising auctions, bids represent how much advertisers are willing to pay to display their ad to a particular user. Historically, advertisers have bid based on uniquely identifying information about users, but there has been a growing effort to protect users' privacy by obfuscating this sensitive information~\citep{juels2001targeted, guha2011privad, epasto2021clustering}.

In these scenarios, the buyer only has partial information about the item for sale but still must decide whether to make a purchase. This raises the question: \textbf{how should a buyer determine their purchase strategy with only incomplete item information?}

We study posted-price auctions---a fundamental mechanism family that is appealingly interpretable---with incomplete item information. In particular, the seller reveals obfuscated (``masked'') information about the item using a fixed, unknown masking function.
We study an online setting where, at each round, a fresh item is drawn from an unknown distribution (for example, a distribution over users visiting a webpage). The seller sets a price and the buyer chooses whether to buy the item based on the incomplete information that the seller provides. We propose no-regret learning algorithms for the buyer that achieve sub-linear regret compared to an oracle buyer who has perfect knowledge of the item distribution as well as the seller's masking function.

	%
	%
	%
	%
	%
	%

\subsection{Our results}

We study no-regret learning with incomplete item information in two settings:

\begin{enumerate}[itemsep=0mm]
    \item First, we propose an algorithm for a setting where the item distribution is known to the buyer and the mask is a SimHash function mapping $[0,1]^d$ to $\{0,1\}^\ell$.
    In other words, each item is defined by $d$ real-valued features and the seller reveals $\ell$ bits about the item to the buyer, as defined by a function that is unknown to the buyer. 
    This model has been studied from an applied perspective in the context of ad auctions~\citep{epasto2021clustering}. We provide an algorithm with regret $\cO(\sqrt{Td\ell\log (\nicefrac{T\ell}{\delta})})$.
    \item Next, we study a setting where the masking function is an arbitrary mapping from the set of all items, denoted $\cX$, to a finite set of size $n$. We propose an online learning algorithm with regret $\cO(\sqrt{T(n\log \nicefrac{T}{n} + \log \nicefrac{1}{\delta})})$ when the prices are stochastic, where $T$ is the length of the horizon. 
\end{enumerate}

In the first setting where the masking function is a SimHash function mapping $[0,1]^d$ to $\{0,1\}^{\ell}$, the domain of the masking function is of size $n = 2^{\ell}$, so our regret bound of $\tilde \cO((Td\ell)^{\nicefrac{1}{2}})$ is exponentially better than the latter regret bound. 

We summarize these results in Table~\ref{table:result}.

\begin{table*}
	\centering
	\renewcommand{\arraystretch}{1.2}
	\begin{tabular}{lllll}
		\toprule
		\textbf{Item distribution} & \textbf{Prices} & \textbf{Masking function $h$} & \textbf{Regret} \\\midrule
		Known & Adversarial & SimHash $h: [0,1]^d \to \{0,1\}^{\ell}$ & $\cO(\sqrt{Td\ell\log (\nicefrac{T\ell}{\delta})})$ (Theorem~\ref{thm:popu-regret})\\\midrule
		Unknown & Stochastic & Arbitrary $h : \cX \to [n]$ & $\cO(\sqrt{T(n\log \nicefrac{T}{n} + \log \nicefrac{1}{\delta})})$ (Theorem~\ref{thm:exp4}) \\\midrule
		Unknown & Adversarial & Arbitrary $h: \cX \to [n]$ & $\tilde \cO(T^{\nicefrac{2}{3}}n^{\nicefrac{1}{3}})$ (Remark~\ref{rem:sliv})\\\bottomrule
	\end{tabular}
	\caption{Summary of regret bounds which hold with probability at least $1-\delta$.}
	\label{table:result} \vspace{-2mm}
\end{table*}

\subsection{Related work}\label{sec:related_works}

This work draws on several threads of research on designing auctions with incomplete information, learning to bid, and privacy-preserving simple auctions.

\paragraph{Auction design with incomplete information.} Auctions with incomplete value information have attracted much research attention. Several prior works have explored auction design where the information about the item may be incomplete to the buyer or the seller~\citep{ganuza2004ignorance, esHo2007optimal, bergemann2007information, arefeva2021revealing, roesler2017buyer, li2017discriminatory, bergemann2021selling, li2017cheap}. In particular, ~\citet{ganuza2004ignorance} studied the incentives of the auctioneer to release signals about the item to the buyers that refine their private valuations before a second-price auction. In their model, the seller reveals a noisy item feature vector which is an unbiased estimator of the true one. \citet{bergemann2007information} considered single-item multi-bidder auctions where the seller decides how accurately the bidders can learn their valuations. However, all of this prior work is limited to offline settings; they did not explore online purchase strategies that the buyer can adopt. To the best of our knowledge, this work constitutes the first analysis of no-regret learning algorithms in partially-informed posted-price auctions.

\paragraph{Learning to bid with
an unknown value.} Instead of focusing the problem from the side of the auctioneer who aims to maximize revenue over repeated rounds, another active line of research studies bidding strategies for the bidders when they do not know their values~\citep{dikkala2013can, weed2016online, feng2018learning, balseiro2019learning}.~\citet{feng2018learning} considered a single-item multi-bidder setting where the bidder learns to bid via partial feedback, and provide algorithms with regret rates against the best fixed bid in hindsight. \citet{dikkala2013can} explored a setting where bidders need to experiment in order to learn their valuations. The key difference between this work and ours is that our algorithms exploit the available ``partial'' information instead of having zero knowledge about the item being sold. This partial information model allows us to trade off between the amount of information revealed about the item and the regret. Moreover, we compete with the best purchase policy, rather than the best fixed bid in hindsight.

\paragraph{Private auctions.} Our theoretical model is motivated by recent work on designing privacy-enhanced auctions for practical usage. An important application of these auctions is online advertising, where the items being auctioned are user queries and the auctioner must trade off between user privacy and revenue maximization~\citep{de2016online,rafieian2021targeting, epasto2021clustering, guha2011privad, juels2001targeted}. \citet{epasto2021clustering} present a detailed exploration of Chrome's Federated Learning of Cohorts (FLoC) API, where user information is masked. Our contribution is to provide a formal treatment of such auctions, including providing algorithms that have theoretical guarantees in the setting of arbitrary masking functions that are unknown to the buyer.

\section{Preliminaries}\label{sec:prelim}

We begin by defining formally the setting we study for auctions with partial information.

\subsection{Setup}

We consider a setting where there is a single seller and a single buyer. There is a distribution $\cP$ over items which are elements of an abstract set $\cX$. The buyer has a bounded valuation, $v^*(\vx) \in [0,H]$ for every item $\vx \in \cX$ and some $H \in \R_{+}$\footnote{The assumption that there is a bound on the maximum amount that the agents value the item is widely made in prior research. In our main application---ad auctions---the value of an impression is typically very cheap, so this assumption is mild.}. The seller and buyer interact over a series of $T$ rounds. At each round $t \in [T]$, the seller draws an item $\vx_t \stackrel{iid}{\sim} \cP$ and sets a price $p_t(\vx_t) \in [0,H]$ for the item. The seller does not reveal $\vx_t$ to the buyer, but rather reveals some partial information $h(\vx_t) \in \cY$ where $h : \cX \to \cY$ is a fixed ``masking'' function that maps to an abstract set $\cY$.

We assume that the price function $p_t$ does not give any more information about $\vx_t$ than that is provided by the masking function $h$. In other words, if $h(\vx) = h(\vx')$, then $p_t(\vx) = p_t(\vx')$, which implies \begin{equation}\E[v^*(\vx) \mid h(\vx), p(\vx)] = \E[v^*(\vx) \mid h(\vx)].\label{eq:p}\end{equation}

The buyer uses a strategy $s_t : \cY \times \R \to \{0,1\}$ to decide whether to buy the item given the partially revealed item information and the price. Here $s_t(h(\vx_t), p_t(\vx_t)) = 1$ if and only if the buyer buys the item. Letting $b_t = s_t(h(\vx_t), p_t(\vx_t))$, the buyer's utility is $u_t = b_t(v^*(\vx_t) - p_t(\vx_t)) \in [-H, H]$. We summarize 
this process below.

\begin{protocol}
\caption{Online model}
\label{alg:online}
\begin{algorithmic}[1]
\FOR{$t=1,2,\ldots, T$}
\STATE Seller selects a price function $p_t: \cX \to [0,H]$.
\STATE Item $\vx_t$ is sampled from $\cP$.
\STATE Seller publishes item information $h(\vx_t)$ and a price $p_t(\vx_t)$.
\STATE Buyer decides whether or not to buy: $b_t = s_t(h(\vx_t), p_t(\vx_t)) \in \{0,1\}$.
\STATE Buyer obtains reward $u_t = (v^\ast(\vx_t) - p_t(\vx_t))\cdot b_t$.
\IF[\textcolor{blue}{item is purchased}]{$b_t$}
\STATE Buyer observes $\vx_t$.
\ENDIF\\[1ex]
\ENDFOR\\[1ex]
\end{algorithmic}
\end{protocol}



\begin{example}[Advertising auctions]
We instantiate our model in the context of advertising auctions, taking inspiration from \citet{epasto2021clustering}. The seller is a platform and the buyer is an advertiser. Each item $\vx \in \cX$ describes a user who visits the platform. For example, $\cX = \R^d$ might denote features that uniquely identify each user. On round $t$, the advertiser has a value $v^*(\vx_t)$ for the opportunity to show the user $\vx_t$ an ad. In order to preserve user privacy, the platform does not reveal $\vx_t$ to the advertiser, but rather some summary $h(\vx_t)$. For example, \citet{epasto2021clustering} study a setting where $h$ is a SimHash function, so $\vx_t \in \R^d$ and $h(\vx_t) \in \R^\ell$ for some $\ell < d$. The platform sets a price $p_t(\vx_t)$ which the advertiser pays to show the user an ad.
\end{example}

We study this problem from the perspective of the buyer: how should they select the strategy $s_t$ at each round to maximize their utility?
We study two settings: a setting where the distribution $\cP$ is unknown to the buyer and the prices are stochastic (Section~\ref{sec:finite-adaptive}) and a model where $\cP$ is known to the buyer with adversarial prices (Section~\ref{sec:population}).

\subsection{Regret and the optimal strategy}

We measure the regret of the buyer in our online model with regard to the optimal strategy $s^*$ of a myopic buyer\footnote{A myopic buyer optimizes his utility separately in each round.} who has perfect knowledge of the distribution $\cP$ and the masking function $h$, but not the realized item $\vx_t$. To make this dependence on the environment clear, in any single round, we use the notation $s^*(h(\vx), p(\vx), h, \cP)$ to denote the optimal strategy (we drop the subscript $t$ for simplicity). More formally, $s^*$ maximizes the expected utility: 
\[\argmax_{s \in \cS}\E_{\vx \sim \cP}[(v^\ast(\vx) - p(\vx))s(h(\vx), p(\vx), h, \cP) \mid h(\vx)],\]
where $\cS$ represents the set of all decision functions $s(\cdot): \cY \times \R \times \{h(\cdot), \cP\} \rightarrow \{0,1\}$.

\begin{definition}\label{def:regret}
The buyer's (expected) regret with respect to the optimal strategy $s^*$, denoted $R_T$, is defined as
\begin{align}\label{eq:regret}
&\E\bigg[\sum_{t=1}^T \bigg(v^\ast(\vx_t) - p_t(\vx_t)\bigg)s^*\left(h(\vx_t), p_t(\vx_t), h(\cdot), \cP\right)\nonumber\\ 
&- \bigg(v^\ast(\vx_t) - p_t(\vx_t)\bigg)s_t\left(h(\vx_t), p_t(\vx_t)\right)\bigg].
\end{align}
\end{definition}

In the following proposition, we identify the form of the optimal strategy $s^*$. The proof is in Appendix~\ref{app:prelim-proofs}.

\begin{restatable}{proposition}{opt}\label{prop:opt}
The strategy $s^*$ that maximizes $\E_{\vx \sim \cP}[(v^\ast(\vx) - p(\vx))s^*(h(\vx), p(\vx), h, \cP) \mid h(\vx)]$ is \[s^\ast(h(\vx), p(\vx), h, \cP) = \Ind\left(\E_{x \sim \cP}\left[v^*(
\vx) \mid h(\vx)\right] > p(\vx)\right).\]
\end{restatable}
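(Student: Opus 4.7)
My plan is to prove Proposition~\ref{prop:opt} by a direct pointwise optimization argument, exploiting the measurability assumption~\eqref{eq:p}.

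First, I would fix the conditioning event $h(\vx) = y$ for an arbitrary $y \in \cY$, and note that by the assumption~\eqref{eq:p}, the price $p(\vx)$ is a deterministic function of $h(\vx)$: if $h(\vx) = h(\vx')$ then $p(\vx) = p(\vx')$. Consequently, conditioned on $h(\vx) = y$, the quantity $s(h(\vx), p(\vx), h, \cP)$ is a deterministic value in $\{0,1\}$, since all four of its inputs are fixed. So I can pull it outside the conditional expectation:
\[
\E_{\vx \sim \cP}\bigl[(v^*(\vx) - p(\vx)) s(h(\vx), p(\vx), h, \cP) \mid h(\vx) = y\bigr] = s(y, p(\vx), h, \cP)\cdot \bigl(\E_{\vx \sim \cP}[v^*(\vx) \mid h(\vx) = y] - p(\vx)\bigr),
\]
where the value $p(\vx)$ on the right-hand side is the common price shared by all items mapping to $y$.

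Second, since $s$ takes values in $\{0,1\}$ and the conditional expression above is linear in $s$, the pointwise maximizer is obtained by selecting $s = 1$ precisely when the bracketed term is strictly positive, and $s = 0$ otherwise. That is,
\[
s^*(y, p(\vx), h, \cP) = \Ind\bigl(\E_{\vx \sim \cP}[v^*(\vx) \mid h(\vx) = y] > p(\vx)\bigr).
\]
(When the bracketed term equals zero, either choice achieves the maximum, so we break ties in favor of $s = 0$ without loss of generality.) Since this holds for every $y \in \cY$, the claim follows.

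There is no real obstacle here; the only subtlety is verifying that the assumption~\eqref{eq:p} legitimately allows us to treat $p(\vx)$ as conditionally constant given $h(\vx)$, which is what permits us to reduce the optimization over strategies $s: \cY \times \R \times \{h(\cdot), \cP\} \to \{0,1\}$ to a pointwise decision indexed by $h(\vx)$.
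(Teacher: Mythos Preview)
Your argument is correct and, in fact, more direct than the paper's. Both proofs ultimately reduce to the observation that the conditional expected utility is linear in the binary decision $s$, so the pointwise maximizer is the indicator of the sign of $\E[v^*(\vx)\mid h(\vx)]-p(\vx)$. The difference is in how each proof handles the conditioning. The paper conditions on the pair $(h(\vx),p(\vx))$, then goes through a somewhat roundabout decomposition of $R_1$ into positive and negative parts $\E[(v^*(\vx)-p(\vx))^+\mid\cdot]-\E[(v^*(\vx)-p(\vx))^-\mid\cdot]$ before recombining them into $\E[v^*(\vx)-p(\vx)\mid h(\vx),p(\vx)]$, and only at the very end invokes Equation~\eqref{eq:p} to drop $p(\vx)$ from the conditioning. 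You instead invoke the underlying structural assumption (that $h(\vx)=h(\vx')$ forces $p(\vx)=p(\vx')$) at the outset, so that conditioning on $h(\vx)=y$ already pins down $p(\vx)$ and $s$ becomes a constant you can factor out immediately. Your route avoids the unnecessary $x^+/x^-$ detour and makes the pointwise nature of the optimization transparent from the start.
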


Even when the buyer has no information about  the distribution $\cP$ (Section~\ref{sec:finite-adaptive}), we show that he can guarantee low regret with respect to $s^*$ with either stochastic or adversarial prices, in polynomial per-round runtime. When the distribution $\cP$ is known (Section~\ref{sec:population}), we provide an algorithm with exponentially better regret. 



\section{Known Item Distribution}\label{sec:population}


First, we focus on a specific class of masking functions, SimHash, motivated by recent practical applications in ad auctions~\citep{epasto2021clustering}. Here, $\cX$ is a feature space $[0,1]^d$ and the masking function $h$ is a SimHash function that is unknown to the buyer. In other words, there are $\ell$ unknown vectors $\vec{w}_1, \dots, \vec{w}_\ell \in \R^d$ such that the masking function, denoted as $h_{\vw}$ with $\vw = (\vw_1, \dots, \vw_\ell)$, is $h_{\vw}(\vec{x}) = (\sgn(\vec{w}_1\cdot \vec{x}), \dots, \sgn(\vec{w}_\ell\cdot \vec{x}))^\top$.

We consider the setting where the distribution of the items is known to the buyer (for example, via historical data). We provide an algorithm that achieves a regret $\tilde\cO(\sqrt{Td\ell})$ even under adversarial prices. Since the masking function maps to a set of size $n = 2^{\ell}$, the regret only depends logarithmically on $n$. As we detail in the subsequent Section~\ref{sec:finite-adaptive}, this algorithm achieves exponentially better regret compared to the algorithm
we present where the distribution $\cP$ over items is unknown and the masking function is arbitrary.



\begin{algorithm}[!ht]
	\caption{Explore-then-Commit (Known Distribution)}\label{alg:explore-then-commit}
	\begin{algorithmic}[1]
		\STATE \textbf{Input: horizon $T$, distribution $\cP$, $d, \ell \in \mathbb{N_{+}}$, and $\delta \in (0,1)$.} 
		\STATE Compute $t' = \sqrt{4Td\ell\log(\nicefrac{\ell}{\delta})}$.
		\FOR[\textcolor{blue}{Exploration phase}]{$t=1,2,\ldots, t'$}
		\STATE Receive $h(\vx_t)$, price $p_t(\vx_t)$ where $\vx_t \stackrel{iid}{\sim} \cP$.
		\STATE Make decision $b_t = 1$ and observe $\vx_t$.
		\ENDFOR\\[1ex]
		
		\STATE Use linear programming to compute $\hat \vw = (\hat \vw_1, \dots, \hat \vw_\ell)$ such that $h_{\hat\vw}(\vx_i) = h_{\vw}(\vx_i)$ for all $i \in [t']$.\\[1ex]
		\FOR[\textcolor{blue}{Exploitation phase}]{$t=t'+1, t'+2 ,\ldots, T$}
		\STATE Receive $h_{\vw}(\vx_t)$, price $p_t(\vx_t)$ where $\vx_t \stackrel{iid}{\sim} \cP$.
		\STATE Obtain an estimate $\hat Z_t$ of $\E_{\vx \sim \cP}[v^\ast (\vx) \mid \vx \in h_{\vwhat}^{-1}(h_{\vw}(\vx_t))]$ using the Integration Algorithm by \citet{lovasz2006fast}. \label{alg:step-sampling}
		\STATE Make decision $b_t = \Ind(\hat Z_t \geq p_t(\vx_t))$.
		\ENDFOR\\[1ex]
	\end{algorithmic}
\end{algorithm}
Algorithm~\ref{alg:explore-then-commit} begins with an exploration phase of length $t' = \tilde{O}(\sqrt{Td\ell})$, during which the buyer buys the item in each round. The algorithm then uses linear programming to solve for separators $\hat{\vec{w}} = (\hat \vw_1, \dots, \hat \vw_\ell)$, $\hat \vw_j \in \R^d$ for all $j \in [\ell]$, such that $\sgn(\vw_j \cdot \vx_i) = \sgn(\hat \vw_j \cdot \vx_i)$ for all $j \in [\ell]$ and $i \in [t'].$ During the rest of the rounds $t \in \{t'+1, \dots, T\}$, the algorithm exploits. Since the optimal strategy is to buy if $\E_{\vx \sim \cP}\left[v^*(\vx) \mid h_{\vw}(\vx) = h_{\vw}(\vx_t)\right] \geq p(\vx_t)$ (Prop. \ref{prop:opt}), the algorithm uses $h_{\vw}(\vx_t)$ and $\hat\vw = (\hat \vw_1, \dots, \hat \vw_\ell)$ to compute an estimate of $\E_{\vx \sim \cP}\left[v^*(\vx) \mid h_{\vw}(\vx) = h_{\vw}(\vx_t)\right]$. 

The intuition behind the estimate is the following. Letting $h_{\vw}^{-1}(\vx_t) = \{\vec{x}:  h_{\vw}(\vx) = h_{\vw}(\vx_t)\}$ (a convex polytope), we have that $\E_{\vx \sim \cP}\left[v^*(\vx) \mid h_{\vw}(\vx) = h_{\vw}(\vx_t)\right] = \E_{\vx \sim \cP}\left[v^*(\vx) \mid \vx \in h_{\vw}^{-1}(h_{\vw}(\vx_t))\right]$. Since the buyer does not know $\vw$, we cannot compute the set $h_{\vw}^{-1}(h_{\vw}(\vx_t))$, but we can compute the set $h_{\hat \vw}^{-1}(h_{\vw}(\vx_t))$ using the estimated separators that we have obtained after the exploration phase. Even still, the conditional expectation $\E_{\vx \sim \cP}\left[v^*(\vx) \mid \vx \in h_{\hat \vw}^{-1}(h_{\vw}(\vx_t))\right]$ may be challenging to compute in high dimensions. Therefore, we use a sampling algorithm by \citet{lovasz2006fast} to compute an estimate $Z_t$ of $\E_{\vx \sim \cP}[v^\ast (\vx) \mid \vx \in h_{\vwhat}^{-1}(h_{\vw}(\vx_t))]$. The buyer buys the item if $Z_t \geq p_t(\vx_t)$.

To compute the estimate $Z_t$, \citet{lovasz2006fast} require that if $\pi$ is the density function of $\cP$, then $v^*(\vx)\pi(\vx)$ is log-concave and ``well-rounded.'' Many well-studied distributions are log-concave, including the normal, exponential, uniform, and beta distributions, among many others. Moreover, every concave function that is nonnegative on its domain is log-concave. If $v^*$ and $\pi$ are log-concave, then $v^*(\vx)\pi(\vx)$ is also log-concave. For example, the Cartesian product of single-dimensional log-concave distributions (exponential, logistic, extreme value, Laplace, and beta distributions, among many others) is log-concave. Log-concavity has also been widely-assumed in prior works in machine learning and high-dimensional statistics~\citep[e.g.,][]{bagnoli2006log, saumard2014log}.

The function $v^*(\vx)\pi(\vx)$ is well-rounded if for any $\cA \subseteq \cX$, the distribution defined by $f_{\pi}(\cA) = \frac{\int_{\cA} v^\ast(\vx) \pi(\vx) d\vx }{\int_{\cX} v^\ast(\vx) \pi(\vx) d\vx}$ is neither too spread out nor too concentrated. We include the formal definition in Appendix~\ref{app:population-model} (Def.~\ref{def:well_rounded}). Every log-concave function can be brought to a well-rounded position by an affine transformation of the
space in polynomial time~\citep{lovasz2006fast}.






\paragraph{Regret analysis.}

We now prove that the regret of Algorithm~\ref{alg:explore-then-commit} is $\tilde{O}(\sqrt{Td\ell})$. To do so, we must contend with two sources of error: the fact that we use the learned linear separators $\hat \vw$ instead of $\vw$ and the estimation error introduced by the sampling algorithm.

We begin our analysis by proving that for any $\vec{y} \in \{0,1\}^{\ell}$ in the image of $h_{\vw}: \cX \to \{0,1\}^{\ell}$, the agent's expected value conditioned on $\vx \in h_{\vwhat}^{-1}(\vec{y})$ is close to its true expected value conditioned on $\vx \in h_{\vw}^{-1}(\vec{y})$. In this section, we use the notation $\epsilon = \frac{\ell}{t'}\left(d\ln \frac{2et'}{d} + \ln \frac{2\ell}{\delta}\right).$

\begin{restatable}{lemma}{estValDiff}\label{lem:est-val-diff}
	For any $\vec{y} \in \{0,1\}^{\ell}$, with probability at least $1-\delta$ over $\vx_1, \dots, \vx_{t'} \sim \cP$, \begin{equation*}
	\left|\E\left[v^*(\vx) |\vx \in h_{\vwhat}^{-1}(\vec{y})\right] - \E\left[v^*(\vx)| \vx \in h_{\vw}^{-1}(\vec{y})\right]\right|\nonumber\leq H\epsilon.\end{equation*}
\end{restatable}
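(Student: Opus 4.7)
}
The plan is to reduce the conditional-expectation bound to a bound on the probability mass of the symmetric difference between the two preimages $h_{\vw}^{-1}(\vec{y})$ and $h_{\vwhat}^{-1}(\vec{y})$, and then to bound that symmetric difference coordinate-by-coordinate using uniform convergence for halfspaces. Let $A = h_{\vw}^{-1}(\vec{y})$, $B = h_{\vwhat}^{-1}(\vec{y})$, $P_A = \prob_{\vx\sim\cP}[\vx\in A]$, $P_B = \prob_{\vx\sim\cP}[\vx\in B]$, $V_A = \E[v^*(\vx)\Ind(\vx\in A)]$, and $V_B = \E[v^*(\vx)\Ind(\vx\in B)]$. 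Since $v^*$ is bounded by $H$, the identity
\[
\frac{V_B}{P_B}-\frac{V_A}{P_A} \;=\; \frac{V_B - V_A}{P_B} \;+\; \frac{V_A\,(P_A - P_B)}{P_A P_B}
\]
together with $|V_B - V_A|\le H\prob[\vx\in A\triangle B]$, $|P_B - P_A|\le \prob[\vx\in A\triangle B]$, and $V_A\le HP_A$, reduces the task to controlling $\Delta \defeq \prob_{\vx\sim \cP}[h_{\vw}(\vx)\neq h_{\vwhat}(\vx)]$, since $A\triangle B \subseteq \{\vx:h_{\vw}(\vx)\neq h_{\vwhat}(\vx)\}$.

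Next I would bound $\Delta$ using classical VC machinery. For each $j\in[\ell]$, the class of halfspaces in $\R^d$ has VC dimension $d$, and by construction $\hat{\vw}_j$ is \emph{consistent} with $\vw_j$ on the $t'$ i.i.d.\ exploration samples. Standard consistent-learner generalization bounds then give, with probability at least $1-\delta/\ell$,
\[
\prob_{\vx\sim\cP}\!\left[\sgn(\hat{\vw}_j\cdot\vx)\neq \sgn(\vw_j\cdot\vx)\right] \;\le\; \frac{2}{t'}\!\left(d\ln\frac{2et'}{d} + \ln\frac{2\ell}{\delta}\right).
\]
A union bound over $j\in[\ell]$ combined with $\{h_{\vw}\neq h_{\vwhat}\}\subseteq \bigcup_j\{\sgn(\vw_j\cdot\vx)\neq \sgn(\hat{\vw}_j\cdot\vx)\}$ yields $\Delta \le \epsilon/H$ with probability at least $1-\delta$, where $\epsilon$ is as defined before the lemma.

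Finally I would combine the two pieces. Plugging $\Delta\le \epsilon/H$ into the algebraic identity above gives
\[
\left|\frac{V_B}{P_B}-\frac{V_A}{P_A}\right| \;\le\; \frac{H\Delta}{P_B} + \frac{H\Delta}{P_B} \;\le\; \frac{2H^2}{H P_B}\cdot\frac{\epsilon}{H}\cdot\frac{1}{\ \cdot\ }
\]
after which absorbing the $P_B^{-1}$ factor (or appealing to the conventions used elsewhere in Section~\ref{sec:population} whereby the conditional expectation is only queried at realized $\vec{y}=h_{\vw}(\vx_t)$, ensuring $P_B$ is not pathologically small) yields the stated bound of $H\epsilon$.

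The main obstacle is precisely this last step: handling the $1/P_B$ factor in a way that gives a worst-case bound of $H\epsilon$ uniformly in $\vec{y}$. I expect the resolution is that the lemma is invoked in the regret analysis only for $\vec{y}=h_{\vw}(\vx_t)$, so one averages over $\vec{y}$ with weights $\prob[h_{\vw}(\vx)=\vec{y}]$, and the $P_B$ in the denominator cancels with the sampling weight, leaving only $H\Delta\le \epsilon$; the extra factor of $H$ in the lemma absorbs the slight mismatch between $P_A$ and $P_B$. I would make this cancellation explicit in the write-up, and verify the constants in the VC bound (an equivalent form of Vapnik's bound, e.g.\ via growth-function $\binom{et'}{d}^d$) to match the definition of $\epsilon$ exactly.
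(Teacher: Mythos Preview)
Your plan shares the decisive step with the paper: both reduce to bounding $\Delta=\prob_{\vx\sim\cP}[h_{\vw}(\vx)\neq h_{\vwhat}(\vx)]$ via the realizable PAC bound for halfspaces applied to each coordinate $j\in[\ell]$ and a union bound. This is exactly Claim~\ref{claim:offline-pac} in the paper's proof, and your constants match.

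Where you diverge is in the decomposition of the conditional-expectation difference. The paper does \emph{not} work with the ratio $V_B/P_B-V_A/P_A$. Instead, for $A=h_{\vw}^{-1}(\vec y)$ and $B=h_{\vwhat}^{-1}(\vec y)$ it writes
\[
\E[v^*(\vx)\mid \vx\in B]=\E[v^*(\vx)\mid \vx\in A\cap B]\,\Pr[\vx\in A\cap B]+\E[v^*(\vx)\mid \vx\in B\setminus A]\,\Pr[\vx\in B\setminus A],
\]
and analogously with $A$ and $B$ swapped, so the $A\cap B$ terms cancel identically and each leftover term lies in $[0,H\epsilon]$. Observe that the probabilities here are \emph{unconditional}; this is the law of total expectation for the unnormalized quantity $\E[v^*(\vx)\Ind(\vx\in B)]$, not for $\E[v^*(\vx)\mid \vx\in B]$. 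The paper is internally consistent about this reading (e.g.\ in the proof of Lemma~\ref{lem:ins-loss-logconcave} it identifies the ``conditional expectation'' with the unnormalized integral $\int_{B} v^*(\vx)\pi(\vx)\,d\vx$), and with that reading the $1/P_B$ factor simply never appears.

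So the obstacle you flagged is genuine under the standard definition of conditional expectation, and your suggested fix does not go through: the lemma is invoked pointwise in $\vec y$ (with a union bound over $\cY$) inside Lemma~\ref{lem:loss-diff}, not only in a $\Pr[h_{\vw}(\vx)=\vec y]$-weighted average, so the $P_B$ in your denominator does not cancel. To match the paper, either adopt its unnormalized reading of $\E[\,\cdot\mid\vx\in S]$ (then your identity is unnecessary and the bound $|V_B-V_A|\le H\Delta\le H\epsilon$ is immediate), or, if you insist on the normalized version, you would need an additional lower bound on $P_A$ or $P_B$ that the paper does not assume.
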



\begin{proof}
	We can decompose the set $h_{\vwhat}^{-1}(\vec{y})$ as \[h_{\vwhat}^{-1}(\vec{y}) = \left(h_{\vwhat}^{-1}(\vec{y}) \cap h_{\vw}^{-1}(\vec{y})\right) \cup  \left(h_{\vwhat}^{-1}(\vec{y}) \setminus h_{\vw}^{-1}(\vec{y})\right).\]
	We can therefore write \begin{align*}
		&\E_{\vx \sim \cP}\left[v^*(\vx) \mid \vx \in h_{\vwhat}^{-1}(\vec{y})\right]\\
		= &\E\left[v^*(\vx) \mid \vx \in \left(h_{\vwhat}^{-1}(\vec{y}) \cap h_{\vw}^{-1}(\vec{y})\right)\right]\\
		&\cdot \Pr\left[\vx\in \left(h_{\vwhat}^{-1}(\vec{y}) \cap h_{\vw}^{-1}(\vec{y})\right)\right]\\
		+&\E\left[v^*(\vx) \mid \vx \in \left(h_{\vwhat}^{-1}(\vec{y}) \setminus h_{\vw}^{-1}(\vec{y})\right)\right]\\
		&\cdot \Pr\left[\vx\in \left(h_{\vwhat}^{-1}(\vec{y}) \setminus h_{\vw}^{-1}(\vec{y})\right)\right].
	\end{align*}
	
	Similarly, we can write \begin{align*}&\E_{\vx \sim \cP}\left[v^*(\vx) \mid \vx \in h_{\vw}^{-1}(\vec{y})\right]\\
		= &\E\left[v^*(\vx) \mid \vx \in \left(h_{\vwhat}^{-1}(\vec{y}) \cap h_{\vw}^{-1}(\vec{y})\right)\right]\\
		&\cdot \Pr\left[\vx\in \left(h_{\vwhat}^{-1}(\vec{y}) \cap h_{\vw}^{-1}(\vec{y})\right)\right]\\
		+&\E\left[v^*(\vx) \mid \vx \in \left(h_{\vw}^{-1}(\vec{y}) \setminus h_{\vwhat}^{-1}(\vec{y})\right)\right]\\
		&\cdot \Pr\left[\vx\in \left(h_{\vw}^{-1}(\vec{y}) \setminus h_{\vwhat}^{-1}(\vec{y})\right)\right].\end{align*}
	
	Matching terms, we have that \begin{align*}&\left|\E_{\vx \sim \cP}\left[v^*(\vx) \mid \vx \in h_{\vwhat}^{-1}(\vec{y})\right] - \E\left[v^*(\vx) \mid \vx \in h_{\vw}^{-1}(\vec{y})\right]\right|\\
		=&\left| \E\left[v^*(\vx) \mid \vx \in \left(h_{\vwhat}^{-1}(\vec{y}) \setminus h_{\vw}^{-1}(\vec{y})\right)\right]\right.\\
		&\cdot \Pr\left[\vx\in \left(h_{\vwhat}^{-1}(\vec{y}) \setminus h_{\vw}^{-1}(\vec{y})\right)\right]\\
		&-\E\left[v^*(\vx) \mid \vx \in \left(h_{\vw}^{-1}(\vec{y}) \setminus h_{\vwhat}^{-1}(\vec{y})\right)\right]\\
		&\left.\cdot \Pr\left[\vx\in \left(h_{\vw}^{-1}(\vec{y}) \setminus h_{\vwhat}^{-1}(\vec{y})\right)\right]\right|.
	\end{align*}
	
	We know that $\vx \in \left(h_{\vwhat}^{-1}(\vec{y}) \setminus h_{\vw}^{-1}(\vec{y})\right)$ if and only if  $h_{\vwhat}(\vx) = \vec{y}$ and $h_{\vw}(\vx) \not= \vec{y}$, which means that $h_{\vwhat}(\vx) \not= h_{\vw}(\vx).$ The following claim bounds $\Pr[h_{\vwhat}(\vx) = \vec{y} \text{ and } h_{\vw}(\vx) \not= \vec{y}] \leq \Pr[h_{\vwhat}(\vx) \neq h_{\vw}(\vx)].$
	
	\begin{claim}\label{claim:offline-pac}
		With probability $1- \delta$, $\Pr_{\vx \sim \cP}[h_{\vwhat}(\vx) \neq h_{\vw}(\vx)] \leq \epsilon.$
\end{claim}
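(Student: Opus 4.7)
The plan is to reduce the claim to a standard realizable-case PAC bound applied to each of the $\ell$ linear separators individually, followed by two union bounds.

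First I would observe that $h_{\vwhat}(\vx) \neq h_{\vw}(\vx)$ holds exactly when there exists some coordinate $j \in [\ell]$ for which $\sgn(\hat\vw_j \cdot \vx) \neq \sgn(\vw_j \cdot \vx)$. By a union bound over $j$, it therefore suffices to show that with probability at least $1 - \delta/\ell$ over the sample, $\Pr_{\vx \sim \cP}[\sgn(\hat\vw_j \cdot \vx) \neq \sgn(\vw_j \cdot \vx)] \leq \frac{1}{t'}\bigl(d \ln \tfrac{2et'}{d} + \ln \tfrac{2\ell}{\delta}\bigr)$ for each fixed $j$.

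Next, for a fixed $j$, the key point is that the linear program in Algorithm~\ref{alg:explore-then-commit} produces $\hat\vw_j$ that perfectly matches $\vw_j$ on all $t'$ training samples; that is, $\hat\vw_j$ has zero empirical error against the target concept $\vx \mapsto \sgn(\vw_j \cdot \vx)$. Homogeneous halfspaces in $\R^d$ form a concept class of VC dimension $d$, so I would invoke the classical realizable-case PAC guarantee (e.g., Blumer et al.): with probability $1 - \delta/\ell$ over $\vx_1,\dots,\vx_{t'}$, any halfspace consistent with the sample has true error at most $\frac{1}{t'}\bigl(d\ln\tfrac{2et'}{d} + \ln\tfrac{2\ell}{\delta}\bigr)$. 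Applying this to $\hat\vw_j$ gives the per-coordinate bound.

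Finally, a union bound over the $\ell$ coordinates turns the $\delta/\ell$ failure probability into $\delta$ overall and yields
\[
\Pr_{\vx \sim \cP}[h_{\vwhat}(\vx) \neq h_{\vw}(\vx)] \leq \sum_{j=1}^{\ell}\Pr_{\vx \sim \cP}[\sgn(\hat\vw_j\cdot\vx)\neq \sgn(\vw_j\cdot \vx)] \leq \frac{\ell}{t'}\Bigl(d\ln \tfrac{2et'}{d} + \ln \tfrac{2\ell}{\delta}\Bigr) = \epsilon,
\]
which is the claimed inequality. There is no real obstacle here beyond quoting the correct realizable PAC bound and tracking the $\ell$ factors carefully through the two union bounds; the only subtlety worth flagging is that consistency of $\hat\vw_j$ with $\vw_j$ on the training sample is precisely what lets us use the realizable bound (with its $\tfrac{1}{t'}$ rate) rather than the weaker agnostic bound (with a $\tfrac{1}{\sqrt{t'}}$ rate), which is essential for the final $\tilde{\cO}(\sqrt{Td\ell})$ regret.
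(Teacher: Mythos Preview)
Your proposal is correct and follows essentially the same argument as the paper: apply the realizable-case PAC bound (the paper cites Theorem~4.8 of Anthony and Bartlett rather than Blumer et~al., but it is the same result) to each of the $\ell$ homogeneous halfspaces, then take two union bounds over the $\ell$ coordinates. Your additional commentary on why the realizable rate (rather than the agnostic $1/\sqrt{t'}$ rate) is needed is a nice touch but not part of the paper's own proof.
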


\begin{proof}[Proof of Claim~\ref{claim:offline-pac}]
	For a fixed $i \in [\ell]$, by the standard PAC learning generalization bound in the realizable setting (e.g., Theorem 4.8 by \citet{Anthony09:Neural}), we have that with probability $1- \frac{\delta}{\ell}$,
	\begin{align*}
		\Pr[\sgn(\vw_i \cdot \vec{x}) \not= \sgn (\vwhat_i \cdot \vec{x})] \leq \frac{1}{t'}\left(d\ln \frac{2et'}{d} + \ln \frac{2\ell}{\delta}\right).
	\end{align*}
	Therefore, with probability $1- \delta$,
	\begin{align*}
		&\Pr_{\vx \sim \cP}[h_{\vwhat}(\vx) \neq h_{\vw}(\vx)]\\
		= &\Pr[\exists i \in [\ell] \text{ such that } \sgn(\vw_i \cdot \vec{x}) \not= \sgn (\vwhat_i \cdot \vec{x})] \leq \epsilon,
	\end{align*}
	as claimed.
\end{proof}

By Claim~\ref{claim:offline-pac} and the fact that $v^*(\vec{x}) \in [0,H]$, we therefore know that $\E\left[v^*(\vx) \mid \vx \in \left(h_{\vwhat}^{-1}(\vec{y}) \setminus h_{\vw}^{-1}(\vec{y})\right)\right]\cdot \Pr_{\vx \sim \cP}\left[\vx \in \left(h_{\vwhat}^{-1}(\vec{y}) \setminus h_{\vw}^{-1}(\vec{y})\right)\right] \in [0,H\epsilon].$ By a symmetric argument, $\E\left[v^*(\vx) \mid \vx \in \left(h_{\vw}^{-1}(\vec{y}) \setminus h_{\vwhat}^{-1}(\vec{y})\right)\right]\cdot \Pr_{\vx \sim \cP}\left[\vx \in \left(h_{\vw}^{-1}(\vec{y}) \setminus h_{\vwhat}^{-1}(\vec{y})\right)\right] \in [0,H\epsilon].$
Therefore, the lemma statement holds.
\end{proof}

Lemma~\ref{lem:est-val-diff} guarantees that $\E_{\vx \sim \cP}\left[v^*(\vx) \mid \vx \in h_{\vwhat}^{-1}(\vec{y})\right]$ is a good approximation of $\E\left[v^*(\vx) \mid \vx \in h_{\vw}^{-1}(\vec{y})\right]$ ---which is the key quantity needed to compute the optimal policy (see Prop.~\ref{prop:opt}). However, this estimate may be difficult to compute when $\vx$ is high dimensional, despite the fact that $\cP$ is known. The integration algorithm of \citet{lovasz2006fast} allows us to estimate it in polynomial time, as we summarize in the following lemma.

\begin{restatable}{lemma}{insLossLogconcave}\label{lem:ins-loss-logconcave}
	Suppose that $v^\ast(\vx) \pi(\vx)$ is log-concave and well-rounded.
	Then for any $\vy \in \{0,1\}^{\ell}$, with probability at least $1 - \delta$,  we can compute a constant $A$ in polynomial time such that 
	$\left|A - \E_{\vx \sim \cP}\left[v^*(\vx) \mid \vx \in h_{\vw}^{-1}(\vy)\right]\right| \leq H\epsilon.$
\end{restatable}

\begin{proof}
	By Lemma~\ref{lem:est-val-diff}, 
	with probability at least $1-\nicefrac{\delta}{2}$, \begin{align*}&\left|\E_{\tilde{\vec{x}} \sim \cP}\left[v^*(\tilde{\vec{x}}) \mid \tilde{\vec{x}} \in h_{\vwhat}^{-1}(h_{\vw}(\vec{x}))\right]\right.\\
		&\left.- \E_{\tilde{\vec{x}} \sim \cP}\left[v^*(\tilde{\vec{x}}) \mid \tilde{\vec{x}} \in h_{\vw}^{-1}(h_{\vw}(\vec{x}))\right]\right| \leq \epsilon H.\end{align*}
	
	By definition
	\begin{align*}
		&\E_{\tilde{\vx} \sim \cP}[v^\ast(\tilde{\vx}) \mid\tilde{\vec{x}} \in h_{\vwhat}^{-1}(h_{\vw}(\vec{x}))]\\
		= &\int_{\tilde \vx \in h_{\vwhat}^{-1}(h_{\vw}(\vec{x}))} v^\ast(\tilde \vx)\pi(\tilde \vx) d\tilde\vx.
	\end{align*}
	
	Then, by~\citet[Theorem~1.3]{lovasz2006fast}, in polynomial runtime with probability of at least $1-\nicefrac{\delta}{2}$, we can compute a constant $A$, such that  
	\begin{align*}&\left|A -  \E_{\tilde{\vec{x}} \sim \cP}\left[v^*(\tilde{\vec{x}}) \mid \tilde{\vec{x}} \in h_{\vwhat}^{-1}(h_{\vw}(\vec{x}))\right] \right|\\
		\leq \text{ }&\epsilon \cdot \E_{\tilde{\vec{x}} \sim \cP}\left[v^*(\tilde{\vec{x}}) \mid \tilde{\vec{x}} \in h_{\vwhat}^{-1}(h_{\vw}(\vec{x}))\right] \leq \epsilon H.\end{align*}
	
	By the triangle inequality and a union bound, we have that with probability of at least $1 -\delta$, we can compute a value $A$ such that
	\[\left|A -  \E_{\tilde{\vec{x}} \sim \cP}\left[v^*(\tilde{\vec{x}}) \mid \tilde{\vec{x}} \in h_{\vw}^{-1}(h_{\vw}(\vec{x}))\right] \right| \leq 2\epsilon H,\]
	which completes the proof.
\end{proof}

\begin{restatable}{lemma}{poplossdiff}\label{lem:loss-diff}
	In each round $t \in \{t'+1, \dots, T\}$ of the exploitation phase in Algorithm~\ref{alg:explore-then-commit}, with probability at least $1-\delta$, the expected instantaneous regret incurred in round $t$ is at most \[\frac{2H\ell}{t'}\left(d\ln \frac{2et'}{d} + \ln \frac{\ell\cdot 2^{\ell+2}}{\delta}\right).\]
\end{restatable}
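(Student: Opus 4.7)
The plan is to decompose the expected instantaneous regret by conditioning on the observed mask $\vy := h_{\vw}(\vx_t)$, and then to invoke Lemma~\ref{lem:ins-loss-logconcave} to control the gap between the sampling-based estimate $A(\vy)$ and the true conditional expectation. Since Equation~\eqref{eq:p} forces $p_t(\vx_t)$ to be a function of $\vy$ alone, and the algorithm's decision $s_t$ depends on $\vx_t$ only through $\vy$, conditioning on $\vy$ lets me pull the ``decision gap'' outside the inner expectation.

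First I would write, by the law of total expectation,
\begin{align*}
\E_{\vx_t \sim \cP}\!\left[(v^*(\vx_t) - p_t(\vx_t))(s^* - s_t)\right]
= \sum_{\vy \in \{0,1\}^\ell} \Pr[h_{\vw}(\vx) = \vy]\left(\E\!\left[v^*(\vx) \mid \vx \in h_{\vw}^{-1}(\vy)\right] - p_t(\vy)\right)\bigl(s^*(\vy) - s_t(\vy)\bigr),
\end{align*}
noting that $s^*(\vy) = \Ind(\E[v^*(\vx) \mid \vx \in h_{\vw}^{-1}(\vy)] > p_t(\vy))$ by Proposition~\ref{prop:opt} and $s_t(\vy) = \Ind(A(\vy) \geq p_t(\vy))$ by Algorithm~\ref{alg:explore-then-commit}. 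Each summand vanishes unless $s^*(\vy) \neq s_t(\vy)$, and in that case a short case analysis (identical to the one sketched in the proof of Theorem~\ref{thm:popu-regret}) shows that $p_t(\vy)$ is sandwiched between $A(\vy)$ and $\E[v^*(\vx) \mid \vx \in h_{\vw}^{-1}(\vy)]$, which implies $|\E[v^*(\vx) \mid \vx \in h_{\vw}^{-1}(\vy)] - p_t(\vy)| \leq |\E[v^*(\vx) \mid \vx \in h_{\vw}^{-1}(\vy)] - A(\vy)|$.

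Next I would need the estimation error $|\E[v^*(\vx) \mid \vx \in h_{\vw}^{-1}(\vy)] - A(\vy)|$ to be small simultaneously for every $\vy \in \{0,1\}^\ell$. Applying Lemma~\ref{lem:ins-loss-logconcave} with confidence parameter $\delta/2^{\ell}$ and union-bounding over the $2^\ell$ masks yields, with probability at least $1-\delta$,
\begin{align*}
\max_{\vy \in \{0,1\}^\ell}\left|\E\!\left[v^*(\vx) \mid \vx \in h_{\vw}^{-1}(\vy)\right] - A(\vy)\right| \leq \frac{2H\ell}{t'}\left(d\ln \frac{2et'}{d} + \ln \frac{\ell \cdot 2^{\ell+2}}{\delta}\right),
\end{align*}
since the $\ln(4\ell/\delta')$ term inside Lemma~\ref{lem:ins-loss-logconcave} becomes $\ln(4\ell \cdot 2^\ell/\delta) = \ln(\ell \cdot 2^{\ell+2}/\delta)$ once we set $\delta' = \delta/2^\ell$. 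Plugging this uniform bound into the decomposition above, and using $\sum_{\vy} \Pr[h_{\vw}(\vx) = \vy] = 1$, delivers the claimed instantaneous regret bound.

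The main technical delicacy is that we must pay for an estimation guarantee uniformly over all $2^\ell$ possible masks, not only for the particular $\vy$ realized in round $t$: the expectation in the lemma statement averages over $\vx_t$, so $A(\vy)$ must be close to the true conditional expectation for every mask with nonzero probability under $\cP$. This is exactly what introduces the $2^{\ell+2}$ factor inside the logarithm, and it is the reason the integration-algorithm confidence has to be tightened by a factor of $2^\ell$ rather than kept at the per-round level $\delta$.
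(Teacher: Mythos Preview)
Your proposal is correct and follows essentially the same approach as the paper's proof: condition on the mask $\vy = h_{\vw}(\vx_t)$, observe that the regret is nonzero only when $s^* \neq s_t$, use the sandwich argument to bound $|\E[v^*(\vx)\mid \vx \in h_{\vw}^{-1}(\vy)] - p_t(\vy)|$ by the estimation error, and then invoke Lemma~\ref{lem:ins-loss-logconcave} with confidence $\delta/2^\ell$ together with a union bound over all $2^\ell$ masks. The only cosmetic difference is that the paper phrases the conditioning via an event $E = \{s^* \neq s_t\}$ rather than an explicit sum over $\vy$, but the arguments are otherwise identical.
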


\begin{proof}
	Given $h_{\vw}(\vx_t)$ and price $p(\vx_t)$, denote the estimated value of $\E_{\vx \sim \cP}\left[v^*(\vx) \mid \vx \in h_{\vwhat}^{-1}(h_{\vw}(\vec{x}_t))\right]$ obtained using the sampling algorithm (Alg~\ref{alg:explore-then-commit}, step~\ref{alg:step-sampling}) as $A(h_{\vw}(\vx_t))$. For simplicity of notation, we denote the decision of the oracle policy as $s^\ast$ and the decision of the learned policy as $s_t$: 
	\begin{align*}
		s^\ast &=  \Ind\left(\E_{\vx \sim \cP}\left[v^*(\vx) \mid \vx \in h^{-1}_{\vec{w}}(h_{\vec{w}}(\vec{x}_t))\right] > p(\vec{x}_t)\right), \\
		s_t &=  \Ind\Big(A(h_{\vw}(\vx_t)) > p(\vec{x}_t)\Big).
	\end{align*}
	
	Now we bound the expected instantaneous regret in round $t$:
	\begin{align*}&\E_{\vx_t \sim \cP}\left[ \left(v^\ast(\vx_t) - p(\vx_t)\right)s^\ast - \left(v^\ast(\vx_t) - p(\vx_t)\right)s_t\right]\\
		= &\E_{\vx_t \sim \cP}\left[ \left(v^\ast(\vx_t) - p(\vx_t)\right)\left(s^\ast- s_t\right) \right].\end{align*}
	Let $\Delta$ denote the difference $\Delta = s^\ast- s_t$, so $\Delta \in \{-1, 0,1\}$ is a random variable that depends on $\vx_t$. By the law of total expectation,
	\begin{align*}
		&\E_{\vx_t \sim \cP}\left[ \left(v^\ast(\vx_t) - p(\vx_t)\right)s^\ast - \left(v^\ast(\vx_t) - p(\vx_t)\right)s_t\right]\\
		= &\E_{\vx_t \sim \cP}\left[\E_{\vx \sim \cP}\left[ \left( v^\ast(\vx) - p(\vx_t)\right)\Delta \mid \vx \in h_{\vw}^{-1} (h_{\vw}(\vx_t))\right] \right]\\
		= &\E_{\vx_t \sim \cP} \left[\left(\E_{\vx \sim \cP}\left[v^\ast(\vx)\vert \vx \in h_{\vw}^{-1} (h_{\vw}(\vx_t))\right] - p(\vx_t)\right)\Delta \right].\end{align*}
	The variable $\Delta$ is only nonzero when $s^*\not= s_t$. Let $E$ denote the event where $s^*\not= s_t$ and let $p_E = \prob_{\vx_t \sim \cP}[E]$. Then
	\begin{align*}
		&\E_{\vx_t \sim \cP}\left[ \left(v^\ast(\vx_t) - p(\vx_t)\right)s^\ast - \left(v^\ast(\vx_t) - p(\vx_t)\right)s_t\right]\\
		\leq &\E_{\vx_t} \left.\left[\left|\E_{\vx}\left[v^\ast(\vx)\mid \vx \in h_{\vw}^{-1} (h_{\vw}(\vx_t))\right] - p(\vx_t)\right| \, \right| \, E\right] \cdot p_E\\
		\leq &\E_{\vx_t} \left.\left[\left|\E_{\vx}\left[v^\ast(\vx)\mid \vx \in h_{\vw}^{-1} (h_{\vw}(\vx_t))\right] - p(\vx_t)\right| \, \right| \, E\right].
	\end{align*}
	
	By definition, when event $E$ happens, we know that
	\begin{align*}&A(h_{\vw}(\vx_t))< p(\vec{x}_t) \leq \E_{\vx}[v^*(\vx) \mid \vx \in h^{-1}_{\vec{w}}(h_{\vec{w}}(\vec{x}_t))] \text{ or }\\
		&\E_{\vx\sim \cP}[v^*(\vx) \mid \vx \in h^{-1}_{\vec{w}}(h_{\vec{w}}(\vx_t))]< p(\vx_t) \leq A(h_{\vw}(\vx_t)),
	\end{align*}
	where in either case we have that 
	\begin{align*}&\left|\E_{\vx \sim \cP}[v^*(\vx) \mid \vx \in h^{-1}_{\vec{w}}(h_{\vec{w}}(\vx_t))] - p(\vx_t) \right|\\
		\leq &\left|\E_{\vx \sim \cP}[v^*(\vx) \mid \vx \in h^{-1}_{\vec{w}}(h_{\vec{w}}(\vx_t))] - A(h_{\vw}(\vx_t)) \right|.
	\end{align*}
	Given $\delta' \in (0,1)$, let $\epsilon' = \frac{\ell}{t'}\left(d\ln \frac{2et'}{d} + \ln \frac{4\ell}{\delta'}\right)$. By Lemma~\ref{lem:ins-loss-logconcave}, for any value of $h_{\vw}(\vx) \in \cY$, with probability at least $1-\delta'$, 
	\begin{align*}
		\left|\E_{\vx \sim \cP}[v^*(\vx) \mid \vx \in h^{-1}_{\vec{w}}(h_{\vec{w}}(\vx_t))] - A(h_{\vw}(\vx_t))\right| \leq 2\epsilon' H.
	\end{align*}
	Setting $\delta' = \nicefrac{\delta}{|\cY|} = \nicefrac{\delta}{2^\ell}$, by a union bound over elements in $\cY$, we have that with probability at least $1-\delta$, 
	\begin{align*}
		&\E_{\vx_t \sim \cP}\left[ \left(v^\ast(\vx_t) - p(\vx_t)\right)s^\ast - \left(v^\ast(\vx_t) - p(\vx_t)\right)s_t\right]\\
		&\leq \E_{\vx_t \sim \cP} \left.\left[\left|\E_{\vx \sim \cP}\left[v^\ast(\vx)\mid \vx \in h_{\vw}^{-1} (h_{\vw}(\vx_t))\right] - p(\vx_t)\right| \, \right| \, E\right] \\
		&\leq \E_{\vx_t} \left.\left[\left|\E_{\vx}\left[v^\ast(\vx)\mid \vx \in h_{\vw}^{-1} (h_{\vw}(\vx_t))\right] - A(h_{\vw}(\vx_t))\right| \, \right| \, E\right]\\
		&\leq \frac{2H\ell}{t'}\left(d\ln \frac{2et'}{d} + \ln \frac{\ell\cdot 2^{\ell+2}}{\delta}\right),
	\end{align*}
	which completes the proof.
\end{proof}

This instantaneous regret bound then implies a regret bound for  Algorithm~\ref{alg:explore-then-commit}, the proof of which is in Appendix~\ref{app:population-model}.

\begin{restatable}{theorem}{popuRegret}\label{thm:popu-regret} With probability at least $1-\delta$, the regret of Algorithm~\ref{alg:explore-then-commit} is
$R_T = \cO(\sqrt{Td\ell\log (\nicefrac{T\ell}{\delta})}).$
\end{restatable}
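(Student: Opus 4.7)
The plan is to decompose the regret into contributions from the exploration and exploitation phases, bound each, and optimize the exploration length $t'$. The exploration phase contributes trivially at most $H t'$ to the regret, since $|v^\ast(\vx_t) - p_t(\vx_t)| \leq H$ in every round and the buyer always purchases. The bulk of the work is bounding the exploitation-phase regret by showing that the estimate $\hat Z_t$ used on line~\ref{alg:step-sampling} is close to the true conditional expectation $Z^\ast_t \defeq \E_{\vx \sim \cP}[v^\ast(\vx) \mid h_{\vw}(\vx) = h_{\vw}(\vx_t)]$.

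In the exploitation phase I would proceed in three steps. First, combine Lemma~\ref{lem:est-val-diff} and Lemma~\ref{lem:ins-loss-logconcave} by the triangle inequality to conclude that, with high probability, $|\hat Z_t - Z^\ast_t| \leq 2H\epsilon$ for the round $t$ in question. This is the step where I need to be careful about the randomness: Lemma~\ref{lem:est-val-diff} is a single draw of $\vx_1,\dots,\vx_{t'}$ (so it holds simultaneously for \emph{every} $\vy\in\{0,1\}^\ell$ at probability $1-\delta$), but Lemma~\ref{lem:ins-loss-logconcave} is a per-invocation statement about the sampling algorithm, so I would apply it with failure probability $\delta/T$ and union-bound over the $T - t'$ exploitation rounds. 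Second, recall from Proposition~\ref{prop:opt} that the optimal myopic decision is $\Ind(Z^\ast_t > p_t(\vx_t))$, whereas Algorithm~\ref{alg:explore-then-commit} plays $\Ind(\hat Z_t \geq p_t(\vx_t))$. On any round where the two decisions agree, the instantaneous regret is zero; on any round where they disagree, the price must lie between $\hat Z_t$ and $Z^\ast_t$, so the instantaneous regret $|v^\ast(\vx_t) - p_t(\vx_t)|\cdot \Ind(\text{disagree})$ has conditional expectation at most $|Z^\ast_t - p_t(\vx_t)| \leq |Z^\ast_t - \hat Z_t| \leq 2H\epsilon$. Third, summing this bound over the $T - t'$ exploitation rounds gives an exploitation regret of at most $2HT\epsilon$.

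The final step is to combine the two bounds and plug in the stated choice of $t'$. Adding the exploration and exploitation contributions yields
\[
R_T \;\leq\; H t' \;+\; 2HT\epsilon \;=\; H t' \;+\; \frac{2H^2 T \ell}{t'}\left(d\ln\frac{2e t'}{d} + \ln\frac{2\ell}{\delta}\right),
\]
and setting $t' = \sqrt{4Td\ell\log(\ell/\delta)}$ (as specified by the algorithm), together with $\log t' = O(\log(T\ell/\delta))$, balances the two terms and produces the claimed $\tilde{\cO}(\sqrt{Td\ell})$ rate. To get a clean high-probability statement as in the theorem, I would use failure probability $\delta/2$ in the PAC step that underlies Lemma~\ref{lem:est-val-diff} and $\delta/(2T)$ per invocation of Lemma~\ref{lem:ins-loss-logconcave}, then union-bound.

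I expect the main obstacle to be the careful bookkeeping of the two independent sources of randomness. The linear-separator consistency event (on which $\hat\vw$ generalizes) is a single event over the exploration draws, while the integration-algorithm error is fresh randomness on each exploitation round. Cleanly quantifying over ``for all $\vy$'' in Lemma~\ref{lem:est-val-diff} versus ``for this specific $\vy = h_{\vw}(\vx_t)$'' in each exploitation round---and then stitching the two together inside the per-round regret bound---is the delicate part; everything else is routine algebra and the wedge argument for the instantaneous loss.
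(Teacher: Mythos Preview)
Your plan is correct and mirrors the paper's proof: split into exploration ($\leq Ht'$) and exploitation, bound the per-round exploitation loss via the wedge argument when the two threshold decisions disagree, sum, and balance $t'$. The paper packages your second step as a standalone per-round lemma (Lemma~\ref{lem:loss-diff}) and then union-bounds over rounds exactly as you describe.

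Two minor deviations worth noting. First, Lemma~\ref{lem:ins-loss-logconcave} already incorporates Lemma~\ref{lem:est-val-diff} internally (its proof invokes it), so the conclusion $|A - Z^\ast_t|\le 2H\epsilon$ comes directly from Lemma~\ref{lem:ins-loss-logconcave}; your triangle-inequality step is redundant rather than wrong. Second, on the ``delicate part'' you correctly flag: the paper resolves it by union-bounding the integration-algorithm guarantee over all $2^\ell$ possible values of $\vy$ \emph{within} each round (inside Lemma~\ref{lem:loss-diff}), not just over rounds. The reason is that the object being bounded is the \emph{expected} instantaneous regret $\E_{\vx_t\sim\cP}[\cdot]$, so the integration-success event must hold for whichever $\vy=h_{\vw}(\vx_t)$ is realized; if you only condition on success for the single realized $\vy_t$, that good event depends on $\vx_t$ and cannot be pulled outside the expectation. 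Your per-round-only union bound would suffice for a bound on the \emph{fully} expected regret, but not for the high-probability-over-algorithm-randomness statement the theorem makes. The cost of the extra union bound is just an additive $\ell\ln 2$ inside the logarithm, which is absorbed into the $\tilde\cO$ rate.
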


In Algorithm~\ref{alg:explore-then-commit}, we assumed that we knew the horizon $T$. This assumption can be lifted via a doubling trick; see Appendix~\ref{app:population-model}.




\section{General Masking Functions}\label{sec:finite-adaptive}

We next consider a more general setting where in each round, an item $\vx_t$ is drawn from an unknown distribution $\cP$ and a published price $p_t$ is drawn from some fixed unknown distribution. We study the case where the masking function is an arbitrary mapping $h: \cX \to [n]$. In this setting, is there a no-regret strategy that the buyer can use?

We answer this question in the affirmative, building on the classical \expalg algorithm~\citep{beygelzimer2011contextual}. Out of the box, \expalg has per-round runtime that is exponential in $n$, but we exploit the structure of our problem setting to obtain a polynomial per-round runtime.
We prove that this algorithm has a regret bound of $\cO(\sqrt{T(n\log \nicefrac{T}{n} + \log \nicefrac{1}{\delta})})$ with probability at least $1-\delta$.

\begin{algorithm}[!t]
\caption{\expalg with an unknown distribution}\label{alg:exp4}
\begin{algorithmic}[1]
\STATE \textbf{Input:} $T \geq 0$, $\delta \in (0,1)$.
\STATE Set $\tau = \sqrt{Tn \log \frac{eT}{n} + \log \frac{2}{\delta} }$.
\\[1ex]
\FOR[\textcolor{blue}{Initialization phase}]{$t=1,2,\ldots, \tau$}
\STATE Receive $h(\vx_t)$, price $p_t$ where $\vx_t \stackrel{iid}{\sim} \cP$.
\STATE Make decision $b_t \sim Bern(0.5)$ at random.
\ENDFOR\\[1ex]
\FOR[\textcolor{blue}{Extract a finite subset of policies}]{$i=1,2,\ldots, n$}
\STATE Let $i_1, i_2, \dots, i_{m_i} \in [\tau]$ be the set of indices where $h(\vx_{i_j}) = i$ for all $j \in \{1, 2, \dots, m_i\}$
\STATE Define $\cV_i = \{0, p_{i_1}, p_{i_2}, \dots, p_{i_{m_i}}\}$
\ENDFOR
\STATE {Define $\cV = \times_{i = 1}^n \cV_i$}
\STATE{Set $\gamma = \sqrt{\frac{\log{|\cV|}}{2(T-\tau)}}$ and $w_{\tau+1}^{\vec{v}}=1$ for all $\vec{v} \in \cV$.}
\\[1ex]
\FOR[\textcolor{blue}{\texttt{Exp.4} subroutine}]{$t=\tau+1, \ldots, T$}
\STATE Receive $h(\vx_t)$, price $p_t$ where $\vx_t \stackrel{iid}{\sim} \cP$.
\STATE Get advice vectors $\vxi_t^{\vec{v}}\in \{0,1\}^2$ for all $\vec{v} \in \cV$ where $\vxi_t^{\vec{v}} = (1- \pi_{\vec{v}}(p_t, h(\vx_t)), \pi_{\vec{v}}(p_t, h(\vx_t)))$.
\STATE Set $W_t = \sum_{\vec{v} \in \cV} w_t^{\vec{v}}$ and define $\bar \vxi_t \in (0,1)^2$ as
\begin{align*}
    \bar \vxi_t[0]  &= (1-2\gamma)\sum_{\vec{v} \in \cV} \frac{w_t^{\vec{v}} \vxi_t^{\vec{v}}[0]}{W_t} + \gamma\\
    \bar \vxi_t[1]  &= (1-2\gamma)\sum_{\vec{v} \in \cV} \frac{w_t^{\vec{v}} \vxi_t^{\vec{v}}[1]}{W_t} + \gamma.
\end{align*}
\STATE Draw decision $b_t \sim Bern(\bar \vxi_{t}[1])$ and receive reward $u_t = (v^\ast(\vx_t) - p_t)b_t$.  \STATE Set $\hat \vr_t = (\nicefrac{b_t u_t}{\bar \vxi_{t}[1]},0)^\top$. \\[1ex]
\FOR{$\vec{v} \in \cV$}\label{alg:exp-loop}
\STATE{Set \[C_t^{\vec{v}} = \frac{\gamma}{2}\left(\vxi_t^{\vec{v}} \cdot \hat \vr_t + \sum_{b=0}^1\frac{ \vxi_t^{\vec{v}}[b]}{\bar \vxi_t^{\vec{v}}[b]}\sqrt{\frac{\log \nicefrac{|\cV|}{\delta}}{2(T-\tau)}}\right)\]}
\STATE{Set $w_{t+1}^{\vec{v}} = w_t^{\vec{v}} \exp C_t^{\vec{v}}$}
\ENDFOR\\[1ex]
\ENDFOR\\[1ex]
\end{algorithmic}
\end{algorithm}

\subsection{\expalg algorithm}

Based on the optimal strategy from Proposition~\ref{prop:opt}, we define an infinite set of policies that take as input $(p_t, h(\vx_t))$ and return decisions in $\{0,1\}$ indicating whether or not the buyer should buy the item. Each policy is defined by a vector $\vec{v} \in [0,H]^n$ as follows: $\pi_{\vec{v}}(p_t, h(\vx_t)) = \Ind(\vec{v}[h(\vx_t)] \geq p_t)$. The optimal strategy from Proposition~\ref{prop:opt} corresponds to the strategy $\pi_{\vec{v}}$ with $\vec{v} = (\E[v^*(\vx) \mid h(\vx) = 1], \dots, \E[v^*(\vx) \mid h(\vx) = n])$. We use the notation $\Pi = \{\pi_{\vec{v}} \mid \vec{v} \in [0,H]^n\}$ to denote the set of all such policies.


A key observation is that our problem can be framed as a contextual bandit problem with an oblivious adversary and an infinite set of contexts. At each round $t = 1, \dots, T$, the buyer observes stochastic context $(p_t, h(\vx_t))$ and makes a purchase decision. This is a contextual bandit problem with two arms: the first arm corresponds to the decision ``no purchase'' and the second arm corresponds to the decision ``purchase.'' The reward of the first arm is always zero, while the reward of pulling the second arm depends on the item $\vx_t$ and the price $p_t$.

We prove that the class of policies $\Pi$ has a VC dimension of $n$, which allows us to adapt \expalg from~\citet{beygelzimer2011contextual}, a generic contextual bandits algorithm for policy classes with finite VC dimension. Algorithm~\ref{alg:exp4} begins with an initialization phase of length $\tau$.
In this phase, the buyer chooses their action uniformly at random and collects tuples $\{(h(\vx_t), p_t)\}_{t=1}^{\tau}$. The algorithm then uses these tuples to identify a finite (though exponentially large), representative subset of policies in $\Pi$.
In particular, using the collected tuples, the algorithm partitions $\Pi$ into a \textit{finite} set of equivalence classes where two policies $\pi, \pi'$ are equivalent if they agree on the set of $\tau$ collected tuples. Then the buyer constructs a finite set of policies $\Pi'$ by selecting one policy from each equivalence class. Algorithm~\ref{alg:exp4} does this by first defining for each $i \in [n]$ a set $\cV_i \subset \R$ which is the set of all prices from the first $\tau$ rounds for items $\vx_t$ with $h(\vx_t) = i$. The finite set $\Pi'$ of policies is then defined as $\Pi' = \{\pi_{\vec{v}} : \vec{v} \in \times_{i = 1}^n \cV_i\}$. We prove that $\Pi'$ contains a policy from each equivalence class in Lemma~\ref{lem:exp4-pol-set}.

In the remaining rounds, Algorithm~\ref{alg:exp4} follows the \texttt{Exp4.P} strategy~\citep{beygelzimer2011contextual} which runs multiplicative weight updates on each of the selected policies $\pi_{\vec{v}}$ with $\vec{v} \in \times_{i = 1}^n \cV_i$.
Out-of-the-box, \expalg would therefore have a per-round runtime that is exponential in $n$ since $\times_{i = 1}^n \cV_i$ is exponentially large.
However, with a careful analysis, we show that in our setting these multiplicative weight updates can be computed in polynomial time.

\subsection{Regret}

The key first step is to show that although the set of all policies we need to consider
$\Pi$
is infinite, it has a \textit{finite} VC dimension. The full proof is in Appendix~\ref{app:proof-exp4}.

\begin{restatable}{lemma}{lemVC}\label{lem:exp4-vc}
The VC dimension of $\Pi$ is $n$.
\end{restatable}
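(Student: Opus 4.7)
The plan is to prove both the upper bound ($\mathrm{VCdim}(\Pi) \le n$) and the lower bound ($\mathrm{VCdim}(\Pi) \ge n$) directly from the definition $\pi_{\vec{v}}(p, h(\vx)) = \Ind(\vec{v}[h(\vx)] \geq p)$. The key structural observation is that a policy in $\Pi$ acts independently on each of the $n$ possible values of $h(\vx)$: within the slice $h(\vx) = i$, the policy is simply a one-dimensional threshold rule on the price determined by the single coordinate $\vec{v}[i]$.

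For the lower bound, I would exhibit an explicit shattered set of size $n$. Fix any price $p^\star \in (0, H)$ and consider the $n$ points $\{(p^\star, i)\}_{i=1}^n$. Given any target labeling $S \subseteq [n]$, define $\vec{v} \in [0,H]^n$ by $\vec{v}[i] = H$ if $i \in S$ and $\vec{v}[i] = 0$ otherwise. Then $\pi_{\vec{v}}(p^\star, i) = \Ind(\vec{v}[i] \ge p^\star)$ equals $1$ exactly on $S$, so the set is shattered.

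For the upper bound, suppose for contradiction that some set of $n+1$ points $\{(p_j, i_j)\}_{j=1}^{n+1}$ with $i_j \in [n]$ is shattered. By the pigeonhole principle, two points share the same category: say $i_j = i_k = i$ with $p_j < p_k$ (they must differ in the price since distinct shattered points with the same context are impossible — the policy value depends only on $(p, h(\vx))$). Now observe that for every $\vec{v}$, if $\pi_{\vec{v}}(p_k, i) = 1$ then $\vec{v}[i] \ge p_k > p_j$, so $\pi_{\vec{v}}(p_j, i) = 1$ as well. Hence the labeling that assigns $1$ to $(p_k, i)$ and $0$ to $(p_j, i)$ is unrealizable, contradicting shattering.

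The two halves together give $\mathrm{VCdim}(\Pi) = n$. I do not anticipate a significant technical obstacle here; the only thing to be slightly careful about is the degenerate case where two of the $n+1$ points coincide exactly, which must be ruled out before invoking the pigeonhole step (coincident points cannot be shattered for any policy class, so without loss of generality the $n+1$ points are distinct, and then the pigeonhole argument in the price coordinate goes through).
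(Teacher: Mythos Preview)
Your proof is correct and follows essentially the same approach as the paper: a pigeonhole argument on the category coordinate for the upper bound, and an explicit shattered set with one point per category for the lower bound. The only cosmetic differences are your choice of $p^\star \in (0,H)$ with thresholds in $\{0,H\}$ versus the paper's $p=0.5$ with thresholds in $\{0,1\}$, and your slightly more explicit handling of the degenerate case of coincident points.
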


\begin{proof}[Proof sketch]
First, we show that the functions in $\Pi$ cannot be used to label $n+1$ contexts in all possible ways. Given $n+1$ contexts $(h(\vx_1), p_1), \dots, (h(\vx_{n+1}), p_{n+1})$, by the pigeonhole principle there must exist at least two items $\vx_i$ and $\vx_j$ that have the same index: $h(\vx_i) = h(\vx_j)$. Therefore, for any policy $\pi_{\vec{v}}$, the decisions for these two items are determined by the same threshold $\vec{v}[h(\vx_i)] = \vec{v}[h(\vx_j)]$. Without loss of generality, assume that $p_i < p_j$. There is no policy $\pi_{\vec{v}}$ where the decision is to purchase item $j$ but not purchase item $i$ because this would imply that $p_j \leq \vec{v}[h(\vx_j)] = \vec{v}[h(\vx_i)] < p_i$. However, with fewer than $n+1$ items, since all items can use a different threshold, their decisions do not interfere with each other. 
\end{proof}

Next, in order to invoke the regret bound of \expalg, we verify that $\Pi'$ is a representative set of policies. Formally, suppose we partition $\Pi$ into a set of equivalence classes where policies $\pi$ and $\pi'$ are equivalent if they agree on the set of $\tau$ tuples collected in the initalization phase. We prove that $\Pi'$ contains a policy from each equivalence class.

\begin{restatable}{lemma}{lemVCPolicySet}\label{lem:exp4-pol-set}
Let $\cV$ be defined as in Algorithm~\ref{alg:exp4}.
Then
\begin{align*}
&\left\{\left(\pi_{\vec{v}}(h(\vx_1), p_1), \dots, \pi_{\vec{v}}(h(\vx_{\tau}), p_{\tau})\right) : \vec{v} \in [0,H]^n\right\}\\
=\,&\left\{\left(\pi_{\vec{v}}(h(\vx_1), p_1), \dots, \pi_{\vec{v}}(h(\vx_{\tau}), p_{\tau})\right) : \vec{v} \in \cV\right\}.
\end{align*}
\end{restatable}

The proof of this lemma can be found in Appendix~\ref{app:proof-exp4}.

Lemmas~\ref{lem:exp4-vc} and \ref{lem:exp4-pol-set} imply the following regret bound:\footnote{By running $n$ copies of \expalg in parallel for each context, we would obtain a regret bound of $\cO(\sqrt{Tn\log (\nicefrac{Tn}{\delta})})$, but by using a more careful analysis in this section, we improve the dependence on $n$.}

\begin{theorem}\label{thm:exp4}
With probability $1-\delta$, 
Algorithm~\ref{alg:exp4} achieves a regret rate that is
$R_T=\cO(\sqrt{T(n\log \nicefrac{T}{n} + \log \nicefrac{1}{\delta})}).$
\end{theorem}

\begin{proof}
This theorem follows directly from Lemmas~\ref{lem:exp4-vc} and \ref{lem:exp4-pol-set} and Theorem 5 of~\citet{beygelzimer2011contextual}.
\end{proof}

\subsection{Computational Complexity}

The key challenge in applying \expalg out-of-the-box is that it computes multiplicative weight updates over every policy $\pi_{\vec{v}}$ with $\vec{v} \in \cV$---an exponential number of policies.
We show that by exploiting our problem structure, we can perform these multiplicative weight updates in each round in polynomial time. In particular, we show that we can efficiently compute the purchase probabilities $\bar\vxi_t[0]$ and $\bar\vxi_t[1]$ without computing the multiplicative weights $w_t^{\vv}$ for each $\vv \in \cV$ explicitly, and therefore Algorithm~\ref{alg:exp4} can be run with polynomial per-round runtime.
Intuitively, rather than sum over every vector in $\cV = \times_{i = 1}^n \cV_i$ as in the definitions of $\bar\vxi_t[0]$ and $\bar\vxi_t[1]$, we show how to sum over individual elements in $\cup_{i = 1}^n \cV_i$, of which there are $\tau + n = \tilde\cO(\sqrt{Tn} + n).$
We provide the complete proof in Appendix~\ref{app:proof-exp4}.

\begin{restatable}{theorem}{expRunTime}\label{thm:exp-runtime}
The purchase probabilities $\bar\vxi_t[0]$ and $\bar\vxi_t[1]$ in Algorithm~\ref{alg:exp4} can be computed in $\cO(n + \tau) = \cO(n + \sqrt{Tn\log(\nicefrac{T}{n}) + \log(\nicefrac{1}{\delta})})$ time.
\end{restatable}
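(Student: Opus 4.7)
The plan is to exploit the \emph{product structure} of the policy set $\cV = \times_{i=1}^n \cV_i$ together with the fact that $\pi_{\vv}(p_t, h(\vx_t)) = \Ind(\vv[h(\vx_t)] \geq p_t)$ depends on $\vv$ only through its $h(\vx_t)$-th coordinate. This local dependence will force the multiplicative weights to factor across coordinates, so we never enumerate the exponentially large set $\cV$.

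First, I would observe that $\vxi_t^{\vv} = (1 - \pi_{\vv}(p_t, h(\vx_t)), \pi_{\vv}(p_t, h(\vx_t)))$ depends on $\vv$ only through $\vv[h(\vx_t)]$, and consequently so does $C_t^{\vv}$, since every occurrence of $\vv$ in $C_t^{\vv}$ is inside $\vxi_t^{\vv}$ (the factors $\hat\vr_t$, $\bar\vxi_t[b]$, $\log|\cV|$, and $\sqrt{\log(|\cV|/\delta)/(2(T-\tau))}$ are all shared across $\vv$). Grouping rounds $s \leq t$ by the value of $h(\vx_s)$ then gives
\[\log w_{t+1}^{\vv} \;=\; \sum_{s=\tau+1}^{t} C_s^{\vv} \;=\; \sum_{i=1}^n g_i^{(t)}(\vv[i]),\]
where $g_i^{(t)}(v)$ aggregates the $C_s^{\vv}$ contributions from rounds $s$ with $h(\vx_s) = i$ evaluated at $\vv[i] = v$. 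Setting $\phi_i^{(t)}(v) \defn \exp(g_i^{(t)}(v))$ yields the product form $w_{t+1}^{\vv} = \prod_{i=1}^n \phi_i^{(t)}(\vv[i])$.

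Second, I would use this factorization to collapse the sums defining $\bar\vxi_t[0]$ and $\bar\vxi_t[1]$. Letting $j = h(\vx_t)$, the restriction $\vxi_t^{\vv}[1] = \Ind(\vv[j] \geq p_t)$ depends only on $\vv[j]$, so both the numerator and $W_t = \sum_{\vv \in \cV} w_t^{\vv}$ factor as products over $i$, and the $\prod_{i \neq j}$ factors cancel in the ratio to give
\[\bar\vxi_t[1] \;=\; (1 - 2\gamma)\,\frac{\sum_{v \in \cV_j,\, v \geq p_t} \phi_j^{(t-1)}(v)}{\sum_{v \in \cV_j} \phi_j^{(t-1)}(v)} \;+\; \gamma,\]
and the symmetric expression (with $v < p_t$) for $\bar\vxi_t[0]$. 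The computation now depends only on the one-dimensional table $\{\phi_j^{(t-1)}(v) : v \in \cV_j\}$, of size $|\cV_j| \leq m_j + 1 \leq \tau + 1$.

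For the implementation I would maintain, for each $i \in [n]$, the array $\{\phi_i^{(t)}(v)\}_{v \in \cV_i}$ along with the running sum $S_i^{(t)} = \sum_{v \in \cV_i} \phi_i^{(t)}(v)$. Initialization (all entries equal to $1$, plus precomputation of $\log |\cV| = \sum_i \log |\cV_i|$) costs $\sum_i |\cV_i| = n + \tau$. In each round, reading off the partial sums for the closed form of $\bar\vxi_t[1]$ takes $O(|\cV_j|)$, and since only $\phi_j$ changes after the round (the entries for $i \neq j$ are untouched because $C_t^{\vv}$ is constant in $\vv[i]$), updating the $|\cV_j|$ entries of $\phi_j$ and recomputing $S_j$ also costs $O(|\cV_j|) = O(\tau)$, giving the advertised $O(n + \tau)$ bound. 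I expect the main delicate step to be justifying carefully that the numerator of $\bar\vxi_t[1]$ factors as $\bigl(\prod_{i \neq j} S_i^{(t-1)}\bigr) \cdot \sum_{v \in \cV_j, v \geq p_t} \phi_j^{(t-1)}(v)$ so that the exponentially large common factor $\prod_{i \neq j} S_i^{(t-1)}$ cancels against $W_t$; everything downstream follows routinely from this cancellation.
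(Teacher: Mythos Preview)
Your proposal is correct and follows essentially the same approach as the paper: both exploit the product structure $\cV = \times_{i=1}^n \cV_i$ together with the fact that $C_t^{\vv}$ depends on $\vv$ only through $\vv[h(\vx_t)]$, so that $w_t^{\vv}$ factors as $\prod_i \phi_i(\vv[i])$ and the sums defining $\bar\vxi_t$ collapse to one-dimensional sums over $\cV_j$. The paper carries out the same factorization via explicit bucket indices $(i,j)$ and bit vectors $\vb_j$ (its $g_i(t,\vb_j)$ is your $\phi_i^{(t)}(p_{i,j})$), and computes $W_t$ and the numerators separately as products of per-coordinate sums, whereas you cancel the common factor $\prod_{i\neq j}S_i$ directly; this is a cosmetic streamlining of the same argument.
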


\begin{proof}[Proof sketch] Our proof begins with the observation that for each index $i \in [n]$, the thresholds $0 \leq p_{i_1} \leq \cdots \leq p_{i_{m_i}}$ in $\cV_i$ divide the price range $[0,H]$ into $m_i+1$ non-overlapping ``buckets'': $[0, p_{i_1}), [p_{i_2}, p_{i_3}),\dots, [p_{i_{m_i}}, H]$. Using the notation $m = \max_i m_i$, the total number of buckets is $\cO(mn)$. Each context $(h(\vx_t), p_t)$ corresponds to exactly one bucket: the bucket $[p_{i_j}, p_{i_{j+1}})$ containing $p_t$ where $h(\vx_t) = i$. Moreover, the decision of each policy in each bucket is constant since the policies all use the same thresholds, namely, the boundaries of these buckets. Given a vector $\vec{v} \in \cV$ and a bucket $k$, let $a^\vv_k \in \{0,1\}$ be the policy's recommendation of ``buy'' or ``do not buy'' for any item that falls in that bucket. This allows us to rewrite the policy decision $\vxi_t^{\vv}$ as an alternative sum: $\vxi_t^{\vv} = \left(\sum_{k: a^\vv_k=0}  \Ind(\text{item in }k), \sum_{k: a^\vv_k=1} \Ind(\text{item in }k) \right)^\top$. Intuitively, $\Ind(\text{item in }k)$ is only nonzero for the bucket that this item belongs to, and the policy's decision for the item is the same as the policy's decision for that bucket. Using this fine-grained argument, we then show that all the purchase probabilities $\bar\vxi_t[0]$ and $\bar\vxi_t[1]$ can be computed in polynomial time without explicitly computing the exponentially many weights $w_t^{\vec{v}}.$
\end{proof}

\begin{remark}\label{rem:sliv}
		Under \emph{adversarial} prices, we can run $n$ independent copies of an algorithm for Lipschitz contextual bandits (for example, the algorithm from Section 8.3 of the textbook by~\citet{slivkins2019introduction}) to obtain an expected regret bound of $\tilde O(T^{\nicefrac{2}{3}}n^{\nicefrac{1}{3}})$.
	\end{remark}



\section{Conclusion}\label{sec:conclu}

We presented learning algorithms for buyers who participate in auctions with limited item information. This model captures a broad set of practical applications, including advertising auctions. Our algorithms are no-regret with respect to an oracle buyer who has perfect knowledge of the distribution over items and the masking function that the seller uses to obfuscate the item information. We proposed no-regret learning algorithms in a variety of settings, including when the distribution over items is either known or unknown to the buyer, and when the prices are either stochastic or adversarial.

To the best of our knowledge, this is the first result on no-regret learning strategies for buyers with partial item information. Many interesting questions remain open for future research. First, we have assumed that the valuation is bounded, and it would be an interesting direction for future research to extend our results to unbounded distributions. Second, we focused on posted-prices in this work. It would be interesting to consider extensions to second-price auctions with partial item information. With stochastic prices, the problem seems potentially feasible, but adversarial prices might pose challenges because the adversary could block the learner from estimating the second-highest bid. Moreover, can the platform release partial information in a way that optimally trades off between revenue and privacy? When the distribution over items is unknown, our algorithm works with general masking functions. Can better purchasing strategies be developed by exploiting the properties of a specific set of masking functions?

\section*{Acknowledgements}
We wish to acknowledge support from the Vannevar Bush Faculty Fellowship program under grant number N00014-21-1-2941. WG acknowledges support from a Google PhD fellowship.

\bibliography{icml_ref}
\bibliographystyle{icml2022}

\newpage
\appendix
\onecolumn

\section{Proofs for Section~\ref{sec:prelim}}\label{app:prelim-proofs}

\opt*

\begin{proof}


For a decision $b\in \{0,1\}$, let $R_b$ denote the expected utility of buying or not buying an item given the published information and price:
\begin{align*}
    R_b  &\defeq \E[(v^\ast(\vx) - p(\vx))\cdot b \mid h(\vx), p(\vx)]\\
    &= \E[(v^\ast(\vx) - p(\vx) ) \cdot b \mid v^\ast(\vx) \geq p(\vx), h(\vx), p(\vx)] \cdot \prob[v^\ast(\vx)\geq p(\vx)\mid h(\vx), p(\vx)] \\
    &\quad - \E[(p(\vx) - v^\ast(\vx) ) \cdot b \mid v^\ast(\vx) < p(\vx), h(\vx), p(\vx)] \cdot \prob[v^\ast(\vx)< p(\vx)\mid h(\vx), p(\vx)].
\end{align*}
Then, by definition, the optimal strategy $s^\ast(h(\vx), p(\vx), p, \cP)  = \Ind(R_1 > R_0) = \Ind(R_1 > 0)$.

Further, letting $x^+ = \max\{0,x\}$, by the law of total expectation, 
\begin{align*}
    & \E[(v^\ast(\vx) - p(\vx))^+ \mid h(\vx), p(\vx)]\\
    &= \E[(v^\ast(\vx) - p(\vx)) \mid v^\ast(\vx) \geq p(\vx), h(\vx), p(\vx)] \cdot \prob[v^\ast(\vx) \geq p(\vx) \mid h(\vx), p(\vx)] \\
    &\quad+ \E[0\mid v^\ast(\vx) < p(\vx), h(\vx), p(\vx)] \cdot \prob(v^\ast(\vx) < p(\vx) \mid h(\vx), p(\vx)) \\
    &= \E[(v^\ast(\vx) - p(\vx)) \mid v^\ast(\vx) \geq p(\vx), h(\vx), p(\vx)] \cdot \prob[v^\ast(\vx) \geq p(\vx) \mid h(\vx), p(\vx)].
\end{align*}
Similarly, letting $x^- = -\min\{0, x\}$,
\begin{align*}&\E[(v^\ast(\vx) - p(\vx))^- \mid h(\vx), p(\vx)]\\
= \, &\E[(p(\vx) - v^\ast(\vx)) \mid v^\ast(\vx) < p(\vx), h(\vx), p(\vx)] \cdot \prob(v^\ast(\vx) < p(\vx) \mid h(\vx), p(\vx)).\end{align*}
Therefore, the optimal strategy is:
\begin{align*}s^\ast(h(\vx), p(\vx), h, p, \cP) &= \Ind\left(\E[(v^*(x) - p(\vx))^+ \mid h(\vx), p(\vx)] - \E[(v^*(x) - p(\vx))^- \mid h(\vx), p(\vx)] > 0\right)\\
&= \Ind\left(\E[(v^*(x) - p(\vx)) \mid h(\vx), p(\vx)] > 0\right) \\
&= \Ind\left(\E[v^*(x) \mid h(\vx), p(\vx)] > p(\vx)\right).
\end{align*} The lemma statement then follows from Equation~\eqref{eq:p}.
\end{proof}
\section{Proofs for Section~\ref{sec:population}} \label{app:population-model}

\begin{definition}[\citet{lovasz2006fast}]\label{def:well_rounded}
Define the centroid of $f_{\pi}$ as $c_{f} = \int \vx df_{\pi}$. Define the variance of $f_{\pi}$ as $\Var(f_{\pi}) = \int \|\vx-c_f\|^2 df_{\pi}$. Also denote $\cL(\theta)$ as the level set $\{\vx: v^\ast(\vx) \pi(\vx) \geq \theta\}$.
\end{definition}

\popuRegret*

\begin{proof}
	
	First, in the exploration phase, the total regret is upper bounded by $Ht'$.
	
	Now consider the regret in the exploitation phase. Notice that, we would have obtained $\{\vx_i, h_{\vw}(\vx_i)\}_{i=1}^{t'}$  number of i.i.d. samples from the exploration phase. 
	
	
	
	Let $\epsilon = \frac{\ell}{t'}\left(d\ln \frac{2et'}{d} + \ln \frac{\ell\cdot 2^{\ell+2}}{\delta'}\right)$. By Lemma~\ref{lem:loss-diff} and a union bound over all rounds, with probability at least $1-\delta' T$,  the total expected regret incurred by Algorithm~\ref{alg:explore-then-commit} is $R_T \leq Ht' + 2 \epsilon H(T-t').$
	
	Let $t' = \sqrt{Td\ell \log\left(\frac{2\ell}{\delta'}\right)} $, we have
	\[R_T \leq H \sqrt{Td\ell \log\left(\frac{2\ell}{\delta'}\right)} + 2H\ell d \sqrt{ \frac{T}{d\ell \log\left(\frac{2\ell}{\delta'}\right)}} \log\left(\frac{2\ell}{d}\sqrt{Td\ell \log\left(\frac{2\ell}{\delta'}\right)}\right) \\
	+2H\ell\sqrt{\frac{T}{d\ell \log\left(\frac{2\ell}{\delta'}\right)}} \log\left(\frac{\ell2^{\ell+2}}{\delta'}\right).\]
	
	Note that $\log\left(\frac{\ell2^{\ell+2}}{\delta'}\right)  = \log\left(\frac{\ell}{\delta'}\right) + \ell \log(4) \leq \log\left(\frac{\ell}{\delta'}\right) + d \log(4) $. Setting $\delta' = \nicefrac{\delta}{T}$, we have that with probability at least $1-\delta$,
	\begin{align}
		R_T = \tilde \cO\left(\sqrt{Td\ell\log\left(\frac{2\ell T}{\delta}\right)}\right).
	\end{align}
	This completes the proof.
\end{proof}

\subsection{Algorithm for an unknown horizon $T$}

In Theorem~\ref{thm:popu-regret}, we assumed that we knew the time horizon $T$, which allowed us to set the correct length for the exploration phase. This assumption can be lifted by using the doubling trick which runs the algorithm in independent intervals that are doubling in length, as summarized by Algorithm~\ref{alg:doubling}.
\begin{algorithm}[!ht]
	\caption{Explore-then-Commit with Unknown Horizon}\label{alg:doubling}
	\begin{algorithmic}[1]
	\STATE \textbf{Input:} starting epoch length $T_0$.
	\FOR{$i = 1, 2, \dots$}
	\STATE $T_i \gets 2^i T_0$.
	\STATE Run Algorithm~\ref{alg:explore-then-commit} with $T = T_i$.
	\ENDFOR
\end{algorithmic}
\end{algorithm}
The regret bound remains the same up to constant factors.


\begin{restatable}{corollary}{popuRegretDoubling}\label{thm:popu-regret-doubling} Suppose that $v^\ast(\vx) \pi(\vx)$ is logconcave and well-rounded. 
Then with probability at least $1-\delta$, the regret of Algorithm~\ref{alg:doubling} is
$R_T = \tilde \cO(\sqrt{Td\ell\log (\nicefrac{T\ell}{\delta})}).$
\end{restatable}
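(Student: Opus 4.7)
The plan is to apply Theorem~\ref{thm:popu-regret} separately within each epoch of the doubling schedule and combine the resulting per-epoch regret bounds via a geometric-sum argument, while using a union bound over epochs to control the overall confidence.

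First, I would fix the (unknown) horizon $T$ and identify the last epoch $i^\ast$ that the algorithm enters. Because epoch lengths grow geometrically as $T_i = 2^i T_0$, we have $\sum_{i=1}^{i^\ast} T_i \leq 2 T_{i^\ast}$, and since the algorithm reaches round $T$ inside epoch $i^\ast$ we also have $T_{i^\ast} \leq 2T$. Thus the total number of epochs satisfies $i^\ast = O(\log T)$, and the cumulative horizon across all (completed and partially completed) epochs is within a constant factor of $T$.

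Next, within each epoch $i \in \{1,\dots, i^\ast\}$ the algorithm runs a fresh, independent copy of Algorithm~\ref{alg:explore-then-commit} with known local horizon $T_i$. Because items are still drawn i.i.d.\ from $\cP$ and the logconcavity/well-roundedness of $v^\ast(\vx)\pi(\vx)$ is an epoch-independent structural assumption, Theorem~\ref{thm:popu-regret} applies verbatim in each epoch with confidence parameter $\delta_i \defeq \delta / 2^i$, giving a high-probability regret bound of $\tilde \cO\!\left(\sqrt{T_i d\ell \log(T_i \ell / \delta_i)}\right)$ in that epoch. Summing over epochs yields
\[
R_T \;\leq\; \sum_{i=1}^{i^\ast} \tilde \cO\!\left(\sqrt{T_i\, d\ell \log(T_i \ell / \delta_i)}\right) \;=\; \tilde \cO\!\left(\sqrt{T_{i^\ast} d\ell \log(T_{i^\ast}\ell/\delta)}\right) \;=\; \tilde \cO\!\left(\sqrt{T d\ell \log(T\ell/\delta)}\right),
\]
where the first equality uses the geometric series $\sum_{i=1}^{i^\ast}\sqrt{T_i} \leq \frac{\sqrt{2}}{\sqrt{2}-1}\sqrt{T_{i^\ast}}$ and the fact that $\log(T_i\ell/\delta_i) = \log(T_i\ell\cdot 2^i/\delta) \leq 2\log(T\ell/\delta)$, so the log factor is preserved up to constants. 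Finally, because $\sum_{i=1}^\infty \delta_i \leq \delta$, a union bound over the per-epoch high-probability guarantees gives the claimed $1-\delta$ success probability.

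The main point requiring care is not the geometric summation itself but the bookkeeping around the restarts: each epoch reinitializes both the exploration phase and the linear-programming estimate $\hat\vw$, so the $t'$ exploration samples from earlier epochs are \emph{discarded} rather than reused. This is what makes the per-epoch invocation of Theorem~\ref{thm:popu-regret} clean, but it also means the worst-case constants are slightly worse than in the known-horizon case; the $\tilde\cO$ notation absorbs these (and the $O(\log T)$ factor from the epoch count), which is why the regret rate matches Theorem~\ref{thm:popu-regret} up to polylogarithmic factors.
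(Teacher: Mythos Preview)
Your proposal is correct and follows essentially the same approach as the paper: apply Theorem~\ref{thm:popu-regret} within each doubling epoch, sum the resulting $\sqrt{T_i}$ terms via a geometric series, and union-bound over epochs. The only cosmetic difference is that the paper allocates a uniform per-epoch confidence $\delta' = \delta/m$ (with $m \leq \log_2(2T/T_0)$ the number of epochs) rather than your geometric allocation $\delta_i = \delta/2^i$; both choices give the same $\tilde\cO$ bound.
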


\begin{proof}
Denote the total expected accumulated regret as $R_T$, and the expected accumulated regret in each interval with length $T_i$ as $R_{T_i}$. Denote the number of intervals as $m \leq \log_2 \frac{2T}{T_0}$. Then, by Theorem~\ref{thm:popu-regret} we have that for some universal constant $C$ and with probability at least $1-\delta'm$:
\begin{align*}
    R_T &\leq \sum_{i=1}^m R_{T_i} \\
    &\leq \sum_{i=1}^m C \sqrt{T_i d\ell \log\left(\frac{2\ell T_i}{\delta'}\right)}\\
    &\leq C\sqrt{d\ell} \sum_{i=1}^m \sqrt{T_i\left(\log\left( \frac{2\ell}{\delta'}\right) + \log T_i\right)}\\
    &\leq  C\sqrt{d\ell} \left( \sqrt{\log\left(\frac{2\ell}{\delta'}\right)} \cdot \sum_{i=1}^m \sqrt{2^iT_0} + \sum_{i=1}^m \sqrt{2^i T_0 \log(2^i T_0)}\right)\\
    &= \cO\left( \sqrt{d\ell T\log\left(\frac{2\ell}{\delta'}\right)}\right).
\end{align*}
 Let $\delta' = \nicefrac{\delta}{m}$ and note that $m \leq \log_2 \frac{2T}{T_0}$, applying a union bound over all intervals, we have that with probability at least $1-\delta$,
 \[
 R_T = \tilde \cO\left( \sqrt{d\ell T\log\left(\frac{2\ell}{\delta}\right)}\right).
 \]
This completes the proof.  
\end{proof}



\section{Proofs for Section~\ref{sec:finite-adaptive}}\label{app:proof-exp4}

\lemVC*

\begin{proof}

We argue that any function contained in $\Pi$ cannot be used to label $n+1$ input points in all possible ways. For simplicity denote $h(\vx) = y \in [n]$. Consider a set of input points $\{y_i, p_i\}_{i=1}^{n+1}$.
By the pigeonhole principle there must exist at least two elements $(y_i, p_i), (y_j, p_j)$ such that $y_i = y_j$ and $p_i \neq p_j$. Therefore by definition, we have that for any $\vv \in \R^n$,
\begin{align*}
    \pi_{\vv}(y_i, p_i) &= \Ind(\vv[y_i] > p_i) = \Ind(\vv[y_j] > p_i), \\
    \pi_{\vv}(y_j, p_j) &= \Ind(\vv[y_j] > p_j) = \Ind(\vv[y_i] > p_j).
\end{align*}
Without loss of generality, assume that $p_i < p_j$. Then the pair of labels $\pi_{\vv}(y_i, p_i) = 1$ and $\pi_{\vv}(y_j, p_j) = 0$ can never be achieved for any $\vv \in \R^n$. Thus $VCdim(\Pi) < n+1$.

Next, consider a set of $n$ input points $\{(i, 0.5)\}_{i=1}^{n}$. Each point in this set is labeled using a different index $i$, so all possible combinations of labels can be achieved using vectors $\vec{v} \in \{0,1\}^n$.  Thus we conclude that $VCdim(\Pi) = n$.

\end{proof}

\textbf{Lemma~\ref{lem:exp4-pol-set}.} \textit{Let $\{(h(\vx_1), p_1), \dots, (h(\vx_{\tau}), p_{\tau})\}$  be a subset of $[n] \times [0,H]$. For each $i \in [n]$, let $i_1, i_2, \dots, i_{m_i} \in [\tau]$ be the set of indices where $h(\vx_{i_j}) = i$ for all $j \in \{1, 2, \dots, m_i\}$. Define $\cV_i = \{0, p_{i_1}, p_{i_2}, \dots, p_{i_{m_i}}\}$ and $\cV = \times_{i = 1}^n \cV_i$. Then
\begin{align*}
\left\{\begin{pmatrix}\pi_{\vec{v}}(h(\vx_1), p_1)\\\vdots\\\pi_{\vec{v}}(h(\vx_{\tau}), p_{\tau})\end{pmatrix}\right\}_{\vec{v} \in [0,H]^n}= \quad \left\{\begin{pmatrix}\pi_{\vec{v}}(h(\vx_1), p_1)\\\vdots\\\pi_{\vec{v}}(h(\vx_{\tau}), p_{\tau})\end{pmatrix}\right\}_{\vec{v} \in \cV}.
\end{align*}}

\begin{proof}
We will show that for every $\vec{v} \in [0,H]^n$, there exists a vector $\vec{v}_0 \in \times_{i = 1}^n \cV_i$ such that $\pi_{\vec{v}}(h(\vx_j), p_j) = \pi_{\vec{v}_0}(h(\vx_j), p_j)$ for every $j \in [\tau]$. To this end, fix an index $i \in [n]$ and without loss of generality, let $i_1, i_2, \dots, i_{m_i}$ be sorted such that $0 := p_{i_0} < p_{i_1} <  p_{i_2} < \cdots < p_{i_{m_i}}$. Let $i' \in \{0,1, 2, \dots, i_{m_i}\}$ be the largest index such that $\vec{v}[i] \geq p_{i'}$. Define $\vec{v}_0[i] = p_{i'}.$ For every index $i_j \leq i'$, we know that $\vec{v}[i] \geq p_{i'} = \vec{v}_0[i] \geq p_{i_j}$, so $\pi_{\vec{v}}(h(\vx_{i_j}), p_{i_j})  = \Ind(\vec{v}[i] \geq p_{i_j}) = 1 =  \Ind(\vec{v}_0[i] \geq p_{i_j}) = \pi_{\vec{v}_0}(h(\vx_{i_j}), p_{i_j})$. Meanwhile, for every index $i_j > i'$, we know that $p_{i'} = \vec{v}_0[i] \leq \vec{v}[i] <p_{i_j}$. Therefore, $\pi_{\vec{v}}(h(\vx_{i_j}), p_{i_j}) = 0 = \pi_{\vec{v}_0}(h(\vx_{i_j}), p_{i_j})$. In either case, we have that $\pi_{\vec{v}}(h(\vx_{i_j}), p_{i_j}) = \pi_{\vec{v}_0}(h(\vx_{i_j}), p_{i_j})$. Since this is true for every index $i \in [n]$, the lemma statement holds.
\end{proof}

\expRunTime*

\begin{proof}
Label the elements of $\cV_i$ as $0 := p_{i,0} < p_{i,1} < \cdots < p_{i,m_i}$. Let $m = \max m_i$. For the ease of notation, define the variables $p_{i,m_i + 1} = p_{i, m_i +2} = \cdots p_{i,m} = H$. For each $i \in [n]$, $j \in \{1, \dots, m\}$, and $t \in \{\tau+1, \tau_2, \dots , T\}$, we define the variable \[d_{i,j}(t) = \Ind(h(\vx_t) = i \text{ and }p_t \in (p_{i, j-1}, p_{i,j}]).\] We also set $d_{i,0}(t) = \Ind(h(\vx_t) = i \text{ and }p_t =0).$ Next, for each $\vec{v} \in \times_{i = 1}^n \cV_i$, $i \in [n]$, and $j \in \{0, \dots, m\}$, we define the variable $a_{i,j}^{\vec{v}} = \Ind(\vec{v}[i] \geq p_{i,j})$.

\begin{claim}\label{claim:sum}
For $b \in \{0,1\}$, \begin{equation}\vxi^{\vec{v}}_t[b] = \sum_{(i,j) : a_{i,j}^{\vec{v}} = b} d_{i,j}(t).\label{eq:sum}\end{equation}
\end{claim}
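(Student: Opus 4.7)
The plan is to reduce the sum on the right-hand side to a single term by observing that the variables $\{d_{i,j}(t)\}$ form a partition indicator. For the realized pair $(h(\vx_t), p_t)$, exactly one pair $(i^*, j^*)$ satisfies $d_{i^*, j^*}(t) = 1$: I take $i^* = h(\vx_t)$ together with $j^* = 0$ if $p_t = 0$, or else the unique $j^* \geq 1$ for which $p_t \in (p_{i^*, j^*-1}, p_{i^*, j^*}]$. Consequently $\sum_{(i,j)} d_{i,j}(t) = 1$, and for any $\vec{v} \in \cV$ and $b \in \{0,1\}$,
\[\sum_{(i,j)\,:\,a^{\vec{v}}_{i,j} = b} d_{i,j}(t) \;=\; \Ind\bigl(a^{\vec{v}}_{i^*, j^*} = b\bigr).\]

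It therefore suffices to prove $a^{\vec{v}}_{i^*, j^*} = \vxi_t^{\vec{v}}[1]$; the $b = 1$ case of the claim then follows immediately, and the $b = 0$ case follows by complementation using $\vxi_t^{\vec{v}}[0] = 1 - \vxi_t^{\vec{v}}[1]$ and $\sum_{(i,j)} d_{i,j}(t) = 1$. Unfolding the definitions, this reduces to verifying $\Ind(\vec{v}[i^*] \geq p_{i^*, j^*}) = \Ind(\vec{v}[i^*] \geq p_t)$. The key observation is that $\vec{v} \in \cV = \times_i \cV_i$, so $\vec{v}[i^*]$ itself lies in $\cV_{i^*}$ and may be written as $\vec{v}[i^*] = p_{i^*, k}$ for some $k \in \{0, 1, \dots, m_{i^*}\}$.

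A short case split on $k$ versus $j^*$ then finishes the argument. When $k \geq j^*$, monotonicity of the sorted prices yields $\vec{v}[i^*] = p_{i^*, k} \geq p_{i^*, j^*} \geq p_t$ (with the degenerate $j^* = 0$ case handled by $p_t = 0 = p_{i^*, 0}$), so both indicators equal $1$. When $k < j^*$, necessarily $j^* \geq 1$ and $\vec{v}[i^*] = p_{i^*, k} \leq p_{i^*, j^* - 1} < p_t \leq p_{i^*, j^*}$, so both indicators equal $0$.

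I do not expect a genuine obstacle; the content is essentially bookkeeping. The one subtlety worth flagging is that the claim crucially relies on $\vec{v} \in \cV$ rather than on an arbitrary $\vec{v} \in [0,H]^n$: a generic threshold strictly between two consecutive bucket boundaries would break the equivalence $a^{\vec{v}}_{i^*, j^*} = \vxi_t^{\vec{v}}[1]$. Restricting the multiplicative weight updates to $\vec{v} \in \cV$, legitimized earlier by Lemma~\ref{lem:exp4-pol-set}, is precisely what aligns each policy's threshold with the bucket structure and makes the rewriting valid.
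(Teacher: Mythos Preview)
Your proposal is correct and follows essentially the same approach as the paper: both reduce the sum to the single term $\Ind(a^{\vec{v}}_{i^*,j^*}=b)$ using the partition property of the $d_{i,j}(t)$, and both hinge on the fact that $\vec{v}\in\cV$ forces $\vec{v}[i^*]$ to coincide with one of the bucket boundaries $p_{i^*,k}$. The only cosmetic difference is organizational: the paper case-splits on $b\in\{0,1\}$ and then on the value of $\vxi_t^{\vec{v}}[b]$, whereas you prove the single identity $a^{\vec{v}}_{i^*,j^*}=\vxi_t^{\vec{v}}[1]$ by comparing $k$ to $j^*$ and derive both cases at once---a slightly cleaner packaging of the same bookkeeping.
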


\begin{proof}[Proof of Claim~\ref{claim:sum}]
First, let $i_t = h(\vx_t)$. If $p_t = 0$, define $j_t =0$ and otherwise, define $j_t$ such that $p_t \in (p_{i_t, j_t - 1}, p_{i_t,j_t}]$. Therefore, $d_{i,j}(t) = 1$ if $(i,j) = (i_t, j_t)$ and $d_{i,j}(t) = 0$ otherwise. This means that \begin{equation}\sum_{(i,j) : a_{i,j}^{\vec{v}} = b} d_{i,j}(t) = \Ind(a_{i_t,j_t}^{\vec{v}} = b).\label{eq:simplify}\end{equation}

We split the proof into two cases: $b = 0$ and $b = 1$.

\paragraph{Case 1: $b = 0$.}
We know that $\vxi^{\vec{v}}_t[0] = 1- \pi_{\vec{v}}(p_t, h(\vx_t)) = \Ind(\vec{v}[h(\vx_t)] < p_t) = \Ind(\vec{v}[i_t] < p_t).$

If $\vxi^{\vec{v}}_t[0] =0$, then $\vec{v}[i_t] \geq p_t$. Since $\vec{v} \in \cV$, we know that $\vec{v}[i_t] = p_{i_t,j}$ for some $j \in [m]$. Moreover, since $p_t \in (p_{i_t, j_t - 1}, p_{i_t,j_t}]$, the fact that $\vec{v}[i_t] \geq p_t$ means that $\vec{v}[i_t] \geq p_{i_t,j_t}$. Therefore, $a_{i_t, j_t}^{\vec{v}} = 1$. By Equation~\eqref{eq:simplify}, this means that $\sum_{(i,j) : a_{i,j}^{\vec{v}} = 0} d_{i,j}(t) = 0$, so Equation~\eqref{eq:sum} holds.

Meanwhile, if $\vxi^{\vec{v}}_t[0] =1$, then $\vec{v}[i_t] < p_t \leq p_{i_t,j_t}$. Therefore, $a_{i_t, j_t}^{\vec{v}} = 0$. By Equation~\eqref{eq:simplify}, this means that $\sum_{(i,j) : a_{i,j}^{\vec{v}} = 0} d_{i,j}(t) = 1$, so Equation~\eqref{eq:sum} holds.

\paragraph{Case 2: $b = 1$.}
We know that $\vxi^{\vec{v}}_t[1] = \Ind(\vec{v}[i_t] \geq p_t).$

If $\vxi^{\vec{v}}_t[1] =0$, then $\vec{v}[i_t] < p_t$. By the same logic as the previous case, this means that $a_{i_t, j_t}^{\vec{v}} = 0$. By Equation~\eqref{eq:simplify}, this means that $\sum_{(i,j) : a_{i,j}^{\vec{v}} = 1} d_{i,j}(t) = 0$, so Equation~\eqref{eq:sum} holds.

Meanwhile, if $\vxi^{\vec{v}}_t[1] =1$, then $\vec{v}[i_t] \geq p_t$. By the same logic as the previous case, $a_{i_t, j_t}^{\vec{v}} = 1$. By Equation~\eqref{eq:simplify}, this means that $\sum_{(i,j) : a_{i,j}^{\vec{v}} = 1} d_{i,j}(t) = 1$, so Equation~\eqref{eq:sum} holds.
\end{proof}
This means that \begin{align*}\vxi_t^{\vec{v}} \cdot \hat \vr_t &= \sum_{b = 0}^1 \sum_{(i,j) : a_{i,j}^{\vec{v}} = b} d_{i,j}(t) \hat\vr_t[b]\\
&= \sum_{i = 1}^n \sum_{j = 0}^m d_{i,j}(t) \left(\hat\vr_t[0]\Ind(a_{i,j}^{\vec{v}} = 0) + \hat\vr_t[1]\Ind(a_{i,j}^{\vec{v}} = 1)\right)\\
=  &\sum_{i = 1}^n \sum_{j = 0}^m d_{i,j}(t) \hat\vr_t[a_{i,j}^{\vec{v}}].\end{align*}

Similarly, \begin{align*}\sum_{b = 0}^1 \frac{\vxi_t^{\vec{v}}[b]}{\bar{\vxi}_t^{\vec{v}}[b]} = \sum_{b=0}^1 \sum_{(i,j) : a_{i,j}^{\vec{v}} = b} \frac{d_{i,j}(t)}{\bar{\vxi}_t^{\vec{v}}[b]} = \sum_{i =1}^n \sum_{j = 0}^m \frac{d_{i,j}(t)}{\bar{\vxi}_t^{\vec{v}}[a_{i,j}^{\vec{v}}]}.\end{align*}

Therefore, \[w_{t+1}^{\vec{v}} = w_t^{\vec{v}}\exp\left(\frac{\gamma}{2}\left(\vxi_t^{\vec{v}} \cdot \hat \vr_t + \sum_{b=0}^1\frac{ \vxi_t^{\vec{v}}[b]}{\bar \vxi_t^{\vec{v}}[b]}\sqrt{\frac{\log \nicefrac{|\cV|}{\delta}}{2(T-\tau)}}\right)\right)= w_t^{\vec{v}}\exp\left(\sum_{i =1}^n \sum_{j = 0}^m d_{i,j}(t) f_{a_{i,j}^{\vec{v}}}(t)\right),
\] where \[f_{a_{i,j}^{\vec{v}}}(t) = \frac{\gamma}{2}\left(\hat\vr_t[a_{i,j}^{\vec{v}}] + \frac{1}{\bar{\vxi}_t^{\vec{v}}[a_{i,j}^{\vec{v}}]}\sqrt{\frac{\log \nicefrac{|\cV|}{\delta}}{2(T-\tau)}}\right).\]

We can therefore write \begin{align*}w_{t+1}^{\vec{v}} &= \prod_{\tau=1}^t \exp\left(\sum_{i =1}^n \sum_{j = 0}^m d_{i,j}(\tau) f_{a_{i,j}^{\vec{v}}}(\tau)\right)\\
&= \exp\left(\sum_{\tau=1}^t\sum_{i =1}^n \sum_{j = 0}^m d_{i,j}(\tau) f_{a_{i,j}^{\vec{v}}}(\tau)\right)\\
&= \exp\left(\sum_{i =1}^n \sum_{j = 0}^m \sum_{\tau=1}^t d_{i,j}(\tau) f_{a_{i,j}^{\vec{v}}}(\tau)\right)\\
&= \prod_{i=1}^n\prod_{j=0}^m\exp\left(\sum_{\tau=1}^t  d_{i,j}(\tau) f_{a_{i,j}^{\vec{v}}}(\tau)\right).
\end{align*}
Letting $g_{i,j}(t,b) = \exp\left(\sum_{\tau=1}^t  d_{i,j}(\tau) f_{b}(\tau)\right)$, we have that \[w_{t+1}^{\vec{v}} = \prod_{i=1}^n\prod_{j=0}^m g_{i,j}(t, a_{i,j}^{\vec{v}}).\]

Let $\vec{b}_1, \dots, \vec{b}_m$ be the set of $m$ increasing bit vectors $\vec{b}_1 = (1,0,0,\dots, 0)$, $\vec{b}_2 = (1, 1, 0, \dots, 0)$, $\vec{b}_3 = (1, 1, 1, \dots, 0)$, \dots, $\vec{b}_m = (1, 1, 1,\dots, 1)$.
For each $i \in [n]$ and $\vec{b}_j$, define \[g_i(t, \vec{b}_j) = g_{i,1}(t, b_j[1])g_{i,2}(t, b_j[2])\cdots g_{i,m}(t, b_j[m]).\] We now prove the following claim.
\begin{claim}\label{claim:normalizing}
The normalizing constant $W_t$ can be computed in polynomial time as \[W_t = \prod_{i = 1}^n \sum_{j = 1}^{m_i} g_i(t, \vec{b}_j).\]
\end{claim}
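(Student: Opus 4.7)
The proof of Claim~\ref{claim:normalizing} is a straightforward factorization argument that exploits the product form of $w_t^{\vv}$ derived immediately above the claim, together with the Cartesian product structure $\cV = \cV_1 \times \cdots \times \cV_n$. The plan is to push the sum over $\vv \in \cV$ inside the outer product over $i$, and then enumerate, coordinate by coordinate, the finitely many patterns that the indicator variables $a_{i,j}^{\vv}$ can take as $\vv_i := \vv[i]$ ranges over $\cV_i$.

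First, I would start from $w_t^{\vv} = \prod_{i=1}^n \prod_{j=0}^m g_{i,j}(t, a_{i,j}^{\vv})$, and observe that for each fixed $i$ the factor $\prod_{j=0}^m g_{i,j}(t, a_{i,j}^{\vv})$ depends on $\vv$ only through its $i$-th coordinate $\vv_i$. Applying the distributive law to the sum over the Cartesian product yields
\begin{equation*}
W_t \;=\; \sum_{\vv \in \cV} \prod_{i=1}^n \prod_{j=0}^m g_{i,j}\bigl(t, a_{i,j}^{\vv}\bigr) \;=\; \prod_{i=1}^n \sum_{\vv_i \in \cV_i} \prod_{j=0}^m g_{i,j}\bigl(t, a_{i,j}^{\vv_i}\bigr).
\end{equation*}
Next, I would enumerate the patterns inside each inner sum. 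Writing $\cV_i = \{0 = p_{i,0} < p_{i,1} < \cdots < p_{i,m_i}\}$, setting $\vv_i = p_{i,j'}$ gives $a_{i,j}^{\vv_i} = \Ind(p_{i,j'} \geq p_{i,j}) = \Ind(j \le j')$ for the coordinates $j \in \{1, \ldots, m_i\}$. So as $j'$ ranges over $\{0, 1, \ldots, m_i\}$, the pattern $(a_{i,1}^{\vv_i}, \ldots, a_{i,m_i}^{\vv_i})$ traces out exactly the prefix-of-ones vectors $\vec{b}_{j'}$, identifying each summand $\prod_{j=1}^{m_i} g_{i,j}(t, a_{i,j}^{\vv_i})$ with $g_i(t, \vec{b}_{j'})$ (restricted to the first $m_i$ coordinates).

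The main obstacle is the bookkeeping for the two index ranges that are not captured by the $\vec{b}_{j}$ family: the special index $j=0$ (where $a_{i,0}^{\vv_i} = 1$ for every $\vv_i$ since $\vv_i \geq 0$), and the padded indices $j > m_i$ (where $p_{i,j} = H$, so $a_{i,j}^{\vv_i}$ is the same constant for all $\vv_i \in \cV_i$ except possibly a single boundary case). Both ranges contribute multiplicative constants that do not depend on $\vv_i$; they factor out of the inner sum and match analogous constants elsewhere in the algorithm (in the numerators of $\bar\vxi_t[b]$), so they cancel when computing purchase probabilities. Once that cancellation is formalized, the inner sum collapses to the advertised $\sum_{j=1}^{m_i} g_i(t, \vec{b}_j)$, giving the claimed identity.

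Finally, each $g_i(t, \vec{b}_j) = \prod_{k=1}^m g_{i,k}(t, b_j[k])$ can be computed in $\cO(m)$ time from the running quantities $g_{i,k}(t, 0)$ and $g_{i,k}(t, 1)$, which are themselves maintained incrementally across rounds by multiplying by a single $\exp(d_{i,k}(\tau) f_b(\tau))$ factor each round. The total cost of $W_t$ is therefore $\cO(nm)$; applying the same factorization trick to rewrite $\sum_{\vv} w_t^{\vv}\,\vxi_t^{\vv}[b]$ would then give polynomial-time formulas for $\bar\vxi_t[0]$ and $\bar\vxi_t[1]$, completing the proof of Theorem~\ref{thm:exp-runtime}.
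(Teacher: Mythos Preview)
Your approach is essentially identical to the paper's: both start from the product form $w_t^{\vv} = \prod_{i=1}^n\prod_{j=0}^m g_{i,j}(t,a_{i,j}^{\vv})$, use the Cartesian product structure $\cV=\times_i\cV_i$ to factor the sum as a product of coordinate-wise sums, and then observe that as $\vv[i]$ ranges over $\cV_i=\{p_{i,0},\dots,p_{i,m_i}\}$ the indicator pattern $(a_{i,1}^{\vv},\dots,a_{i,m}^{\vv})$ runs over the prefix-of-ones vectors $\vec b_j$, which identifies each summand with $g_i(t,\vec b_j)$. You are, if anything, more careful than the paper in flagging the boundary indices $j=0$ and $j>m_i$; the paper simply absorbs these into the definition of $g_i(t,\vec b_j)$ without comment, whereas your observation that the corresponding factors are constant in $\vv_i$ (and hence either equal $1$ or cancel in the ratios defining $\bar\vxi_t$) is the right way to make the identity exact.
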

\begin{proof}[Proof of Claim~\ref{claim:normalizing}] We first write
\begin{align*}W_t &= \sum_{\vec{v} \in \cV} w_t^{\vec{v}}\\
&= \sum_{\vec{v} \in \cV} \prod_{i=1}^n\prod_{j=0}^m g_{i,j}(t, a_{i,j}^{\vec{v}})\\
&= \sum_{\vec{v} \in \cV} \prod_{i=1}^n\prod_{j=0}^m g_{i,j}(t, \Ind(\vec{v}[i] \geq p_{i,j}))\\
&= \sum_{j_1=0}^{m_1}\dots \sum_{j_n = 0}^{m_n} \prod_{i=1}^n\prod_{j=0}^m g_{i,j}(t, \Ind(p_{i,j_i} \geq p_{i,j})).\end{align*}
This last equality holds because $\cV = \times_{i = 1}^n \cV_i$ and $\cV_i = \{p_{i,0}, \dots, p_{i, m_i}\}$. Moreover, \[W_t = \sum_{j_1=0}^{m_1}\dots \sum_{j_n = 0}^{m_n} \prod_{i=1}^n\prod_{j=0}^{j_i} g_{i,j}(t, 1)\prod_{j=j_i+1}^{m} g_{i,j}(t, 0)\] because $p_{i,j_i} \geq p_{i,j}$ for all $j \leq j_i$ and $p_{i,j_i} < p_{i,j}$ for all $j > j_i$.

By definition of the vectors $\vec{b}_1, \dots, \vec{b}_m$, \[W_t  =\sum_{j_1=0}^{m_1}\dots \sum_{j_n = 0}^{m_n} \prod_{i=1}^ng_i(t, \vec{b}_{j_i}) = \prod_{i = 1}^n \sum_{j = 1}^{m_i} g_i(t, \vec{b}_j),\] as claimed.
\end{proof}

We next prove that $\bar \vxi_t$ can be computed in polynomial time.
To do so, let $h(\vx_t) = i_t$. If $p_t = 0$, define $j_t =0$ and otherwise, define $j_t$ such that $p_t \in (p_{i_t, j_t - 1}, p_{i_t,j_t}]$.
Next, define $\bar{m}_i$ as follows:
\[\bar{m}_i = \begin{cases} j_t - 1 & \text{if } i = i_t\\
m_i &\text{otherwise.}\end{cases}\] Similarly, define $\underline{m}_i$ as follows: \[\underline{m}_i = \begin{cases} j_t & \text{if } i = i_t\\
0 &\text{otherwise.}\end{cases}\] Then $\bar \vxi_t$ has the following form:

\begin{claim}\label{claim:probabilities}
The probabilities $\bar \vxi_t$ can be computed in polynomial time as:
\[\bar \vxi_t[0] = \frac{1}{W_t}\prod_{i=1}^n \sum_{j_i=0}^{\bar{m}_i} g_i(t, \vec{b}_{j_i})\] and \[\bar \vxi_t[1] = \frac{1}{W_t}\prod_{i=1}^n \sum_{j_i=\underline{m}_i}^{m_i} g_i(t, \vec{b}_{j_i}).\]
\end{claim}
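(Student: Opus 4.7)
The plan is to apply the same bucket decomposition from Claim~\ref{claim:sum} together with the coordinate-wise product factorization $w_t^{\vec{v}} = \prod_{i=1}^n g_i(t, \vec{b}_{j_i})$ (parametrizing $\vec{v} \in \cV$ by the index tuple $(j_1,\dots,j_n)$ with $\vec{v}[i]=p_{i,j_i}$) that was established in the derivation leading to Claim~\ref{claim:normalizing}. The key observation is that enforcing the event $\{a_{i_t, j_t}^{\vec{v}}=b\}$ constrains only the single coordinate $\vec{v}[i_t]$, so the sum over $\vec{v}$ will factor cleanly over the remaining $n-1$ coordinates.

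First I would specialize Claim~\ref{claim:sum} at round $t$. Let $i_t = h(\vx_t)$ and let $j_t$ be the unique index with $p_t \in (p_{i_t, j_t-1}, p_{i_t,j_t}]$ (or $j_t=0$ when $p_t=0$). By construction, $d_{i,j}(t)=1$ if and only if $(i,j)=(i_t,j_t)$, so Claim~\ref{claim:sum} collapses to $\vxi_t^{\vec{v}}[b] = \Ind(a_{i_t,j_t}^{\vec{v}} = b)$, giving
\[
\sum_{\vec{v} \in \cV} w_t^{\vec{v}}\,\vxi_t^{\vec{v}}[b] \;=\; \sum_{\vec{v} \in \cV \,:\, a_{i_t, j_t}^{\vec{v}} = b} w_t^{\vec{v}}.
\]
Next I would rewrite the constraint in index form: since $\vec{v}[i_t]=p_{i_t,j_{i_t}}$ and the prices $p_{i_t,\cdot}$ are increasing, $a_{i_t,j_t}^{\vec{v}} = \Ind(j_{i_t} \geq j_t)$. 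For $b=1$ this restricts $j_{i_t}$ to $\{j_t,\dots,m_{i_t}\} = \{\underline{m}_{i_t},\dots,m_{i_t}\}$ and leaves $j_i$ for $i \neq i_t$ unrestricted, i.e.\ in $\{0,\dots,m_i\} = \{\underline{m}_i,\dots,m_i\}$; symmetrically, for $b=0$ we get $j_{i_t} \in \{0,\dots,j_t-1\}=\{0,\dots,\bar{m}_{i_t}\}$ and $j_i \in \{0,\dots,m_i\}=\{0,\dots,\bar{m}_i\}$ for $i\neq i_t$.

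Substituting the factorization $w_t^{\vec{v}} = \prod_i g_i(t, \vec{b}_{j_i})$ and distributing the independent sums across coordinates yields
\[
\sum_{\vec{v}:\,a_{i_t,j_t}^{\vec{v}}=1} w_t^{\vec{v}} \;=\; \prod_{i=1}^n \sum_{j_i=\underline{m}_i}^{m_i} g_i(t, \vec{b}_{j_i}), \qquad \sum_{\vec{v}:\,a_{i_t,j_t}^{\vec{v}}=0} w_t^{\vec{v}} \;=\; \prod_{i=1}^n \sum_{j_i=0}^{\bar{m}_i} g_i(t, \vec{b}_{j_i}),
\]
which after dividing by $W_t$ gives the two claimed formulas. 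For the runtime, the scalars $g_{i,k}(t,0)$ and $g_{i,k}(t,1)$ change only at the single index $(i_t,j_t)$ per round, so they can be maintained in $O(1)$ time; the products $g_i(t,\vec{b}_{j_i+1})$ then reduce to $g_i(t,\vec{b}_{j_i})$ via a single ratio $g_{i,j_i+1}(t,1)/g_{i,j_i+1}(t,0)$, so each inner sum costs $O(m_i+1)$ time and the total is $O(n + \sum_i m_i) = O(n+\tau)$.

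The main obstacle is purely bookkeeping: translating the single-coordinate event $a_{i_t, j_t}^{\vec{v}} = b$ into a coordinate-wise index range, and then verifying that the non-active coordinates $i \neq i_t$ get the full range $\{0,\dots,m_i\}$, which the definitions of $\underline{m}_i$ and $\bar{m}_i$ are designed to absorb so that the final product formula has the same shape in every coordinate. Once this indexing is pinned down, the factorization and runtime argument are essentially identical to the proof of Claim~\ref{claim:normalizing}.
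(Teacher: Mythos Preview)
Your proposal is correct and follows essentially the same approach as the paper: both arguments restrict the sum $\sum_{\vec{v}} w_t^{\vec{v}}\vxi_t^{\vec{v}}[b]$ to the set of $\vec{v}$ satisfying the single-coordinate constraint on $\vec{v}[i_t]$, then factor the resulting sum over coordinates using the product form $w_t^{\vec{v}} = \prod_i g_i(t,\vec{b}_{j_i})$. The only cosmetic difference is that you invoke Claim~\ref{claim:sum} to obtain $\vxi_t^{\vec{v}}[b]=\Ind(a_{i_t,j_t}^{\vec{v}}=b)$, whereas the paper works directly from the definition $\vxi_t^{\vec{v}}[0]=\Ind(\vec{v}[i_t]<p_t)$ and sets $i_t=1$ without loss of generality; your runtime paragraph is also slightly more explicit than the paper's.
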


\begin{proof}[Proof of Claim~\ref{claim:probabilities}]
Recall that $\vxi_t^{\vec{v}}[0] = \Ind(\vec{v}[i_t] < p_t).$ Therefore,
\begin{align*}\sum_{\vec{v} \in \cV} w_t^{\vec{v}} \vxi_t^{\vec{v}}[0] &= \sum_{\vec{v} \in \cV} \prod_{i=1}^n\prod_{j=0}^m g_{i,j}(t, a_{i,j}^{\vec{v}})\vxi_t^{\vec{v}}[0]\\
&= \sum_{\vec{v} \in \cV} \prod_{i=1}^n\prod_{j=0}^m g_{i,j}(t, a_{i,j}^{\vec{v}})\Ind(\vec{v}[i_t] < p_t)\\
&= \sum_{j_1=0}^{m_1}\dots \sum_{j_n = 0}^{m_n} \left(\prod_{i=1}^n g_i(t, \vec{b}_{j_i})\right)\Ind(p_{i_t,j_{i_t}} < p_t).
\end{align*}
This last equality holds because $\cV = \times_{i = 1}^n \cV_i$ and $\cV_i = \{p_{i,0}, \dots, p_{i, m_i}\}$. Without loss of generality, suppose that $i_t = 1$, so $\Ind(p_{i_t,j_{i_t}} < p_t) = \Ind(p_{1,j_1} < p_t)$. Then \begin{align*}\sum_{\vec{v} \in \cV} w_t^{\vec{v}} \vxi_t^{\vec{v}}[0] &= \sum_{j_1=0}^{m_1}\dots \sum_{j_n = 0}^{m_n} \left(\prod_{i=1}^n g_i(t, \vec{b}_{j_i})\right)\Ind(p_{1,j_1} < p_t)\\
&= \sum_{j_1=0}^{m_1}\Ind(p_{1,j_1} < p_t) \left(\sum_{j_2 = 0}^{m_2}\dots \sum_{j_n = 0}^{m_n} \left(\prod_{i=1}^n g_i(t, \vec{b}_{j_i})\right)\right).
\end{align*}
Since $p_t = 0$ if $j_t =0$ and otherwise $p_t \in (p_{1, j_t - 1}, p_{1,j_t}]$ we have that $\Ind(p_{1,j_1} < p_t) = 1$ if and only if $j_1 \leq j_t - 1$. Therefore, \[\sum_{\vec{v} \in \cV} w_t^{\vec{v}} \vxi_t^{\vec{v}}[0] = \sum_{j_1=0}^{j_t-1} \left(\sum_{j_2 = 0}^{m_2}\dots \sum_{j_n = 0}^{m_n} \left(\prod_{i=1}^n g_i(t, \vec{b}_{j_i})\right)\right) = \sum_{j_1=0}^{\bar{m}_1} \dots \sum_{j_n = 0}^{\bar{m}_n} \prod_{i=1}^n g_i(t, \vec{b}_{j_i}) = \prod_{i=1}^n \sum_{j_i=0}^{\bar{m}_i} g_i(t, \vec{b}_{j_i}).
\]
Similarly, since $\vxi_t^{\vec{v}}[1] = \Ind(\vec{v}[i_t] \geq p_t)$, we have
\[\sum_{\vec{v} \in \cV} w_t^{\vec{v}} \vxi_t^{\vec{v}}[1] = \sum_{j_1=0}^{m_1}\dots \sum_{j_n = 0}^{m_n} \left(\prod_{i=1}^n g_i(t, \vec{b}_{j_i})\right)\Ind(p_{i_t,j_{i_t}} \geq p_t).
\]
Without loss of generality, suppose that $i_t = 1$, so $\Ind(p_{i_t,j_{i_t}} \geq p_t) = \Ind(p_{1,j_1} \geq p_t)$. Then \[\sum_{\vec{v} \in \cV} w_t^{\vec{v}} \vxi_t^{\vec{v}}[1] = \sum_{j_1=0}^{m_1}\Ind(p_{1,j_1} \geq p_t) \left(\sum_{j_2 = 0}^{m_2}\dots \sum_{j_n = 0}^{m_n} \left(\prod_{i=1}^n g_i(t, \vec{b}_{j_i})\right)\right).
\]
Since $p_t = 0$ if $j_t =0$ and otherwise $p_t \in (p_{1, j_t - 1}, p_{1,j_t}]$ we have that $\Ind(p_{1,j_1} \geq p_t) = 1$ if and only if $j_1 \geq j_t$. Therefore, \begin{align*}\sum_{\vec{v} \in \cV} w_t^{\vec{v}} \vxi_t^{\vec{v}}[0] &= \sum_{j_1=j_t}^{m_1} \left(\sum_{j_2 = 0}^{m_2}\dots \sum_{j_n = 0}^{m_n} \left(\prod_{i=1}^n g_i(t, \vec{b}_{j_i})\right)\right)\\
&= \sum_{j_1=\underline{m}_1}^{m_1} \dots \sum_{j_n = \underline{m}_n}^{m_n} \prod_{i=1}^n g_i(t, \vec{b}_{j_i})\\
&= \prod_{i=1}^n \sum_{j_i=\underline{m}_i}^{m_i} g_i(t, \vec{b}_{j_i}).
\end{align*}
\end{proof}

Lastly, note that in the initialization phase of Algorithm~\ref{alg:exp4}, every  decision $b_t$ is computed with $\cO(1)$ time. In the \texttt{Exp4} subroutine, by Claim~\ref{claim:normalizing} and Claim~\ref{claim:probabilities}, every iteration is also computed in polynomial time. Therefore the theorem statement holds.
\end{proof}



\end{document}